\newtheorem{theorem}{Theorem}[section]
\newtheorem{lemma}[theorem]{Lemma}
\newtheorem{remark}{Remark}[section]
\newtheorem{definition}{Definition}[section]
\newtheorem{example}{Example}[section]
\newcommand{\iq}{{\bf i}}
\newcommand{\jq}{{\bf j}}
\newcommand{\kq}{{\bf k}}
\newcommand{\Aq}{{\bf A}}
\newcommand{\aq}{{\bf a}}
\newcommand{\Vq}{{\bf V}}
\newcommand{\vq}{{\bf v}}
\newcommand{\Uq}{{\bf U}}
\newcommand{\Lq}{{\bf L}}
\newcommand{\Sq}{{\bf S}}
\newcommand{\Xq}{{\bf X}}
\newcommand{\xq}{{\bf x}}
\newcommand{\Yq}{{\bf Y}}
\newcommand{\wq}{{\bf w}}
\newcommand{\Fq}{{\bf F}}
\newcommand{\Pq}{{\bf P}}
\newcommand{\Qmn}{\mathbb{Q}^{n_1\times n_2}}
\newcommand{\rank}{rank}
\newcommand{\tr}{\operatorname{Tr}}
\newcommand{\diag}{\texttt{diag}}
\title{Non-Local Robust Quaternion Matrix Completion for Color Image and Video Inpainting\thanks{Qiyu Jin, Xile Zhao and Zhigang Jia contribute equally to this work.}}
\author{Zhigang Jia\footnote{Zhigang JIA is with School of Mathematics and Statistics, Jiangsu Normal University,
Xuzhou 221116, P. R. China.
Email: zhgjia@jsnu.edu.cn}, 
~~Qiyu Jin\footnote{Qiyu JIN is with School of Mathematical Science,  Inner Mongolia University, Hohhot  010021, P. R. China. E-mail: qyjin2015@aliyun.com},  
~~Michael K. Ng\footnote{Corresponding author. Department of Mathematics, The  University of Hong Kong, Hong Kong.   E-mail: mng@maths.hku.hk}  
~~and Xile Zhao\footnote{School of Mathematical Sciences\/Research Center for Image and Vision Computing, University of Electronic Science and Technology of China, Chengdu 611731, 
P. R. China. E-mail: xlzhao122003@163.com}}
\begin{document}
\date{}
\maketitle
\begin{abstract}
The  image nonlocal self-similarity (NSS) prior refers to the fact that a local patch often has many
nonlocal similar patches to it across the image and has been widely applied in many recently proposed machining learning algorithms for image processing.  However, there is no theoretical analysis on its working principle in the literature. In this paper, we discover a potential causality between  NSS  and  low-rank property of color images, which is also available to grey images.  A new patch group  based NSS prior scheme is proposed to learn explicit NSS models of natural color images.  The numerical low-rank property of patched matrices is also rigorously proved.  The NSS-based  QMC  algorithm  computes an optimal low-rank approximation to the high-rank color image,  resulting in high PSNR and SSIM measures and particularly the better visual quality. A new tensor NSS-based QMC method is also presented to solve the color video inpainting problem based on quaternion tensor representation.  The numerical experiments on  color images and videos indicate the advantages of  NSS-based QMC over the state-of-the-art methods.\\

{\bf Keywords:}  
Low-rank approximation, quaternion singular value decomposition, nonlocal self-similarity, color image inpainting, color video.
\end{abstract}

\section{Introduction}

The robust quaternion matrix completion (QMC) method  is recently proposed in \cite{jia2018quaternion} to reconstruct low-rank matrices from incomplete and corrupted entries. By encoding color images into purely imaginary quaternion matrices,  this method has been successfully applied to solve the color image inpainting problem with superiority on preserving original color information.  In theory,  the QMC method can successfully recover the original color image which satisfies incoherent conditions and is of low rank  when the missed and noised pixels obey uniform distribution and independence constrains \cite{jia2018quaternion}. However,  we find that the  QMC method performs wonderfully when the target color images are smooth, but often loses the sharp information such as color edges when the original color images contain plenty of textures. Mathematically, the low-rank property of their quaternion matrix representations is the inherent issue.   This motivates us to develop a method to rearrange the pixels of observation such that the rearranged original color image  is of  low-rank.  In this paper,  a new nonlocal self-similarity (NSS) based QMC method is proposed to apply the NSS prior of color images to  significantly improve the completing capability.  Numerical experiments indicate that  this new method can successfully recover  natural color images which are essentially not of low-rank and is efficient to solve the large-scale color image and video inpainting problem.

In the literature, the latest denoising methods have attempted to take advantage of another regularity, the self-similar structures of most images. The nonlocal-means method \cite{buades2005review,buades2005non} seems to be one of the most famous methods to this aim.
 It discovers the property that images are redundant and have self-similar structures and uses this property to find a series self-similar patches or similar pixels. The first version nonlocal-means  uses similar patches to find similar pixels and calculates the weighted means of the similar pixel values. Until  Buades et al. \cite{buades2005review,buades2005non} introduced the idea to find  patches that are similar to the currently processed one, this idea was called nonlocal-means. Since the search for similar pixels will be made in a larger neighborhood, but still locally, the name ``non-local'' is somewhat misleading \cite{buades2011non}, but people have been familiar with it.  Algorithms proposing parsimonious but redundant representations of patches on patch dictionaries
are proposed in \cite{elad2006image,mairal2008sparse,mairal2008learning,yu2010image}.
The BM3D \cite{dabov2007image} is probably the most efficient patch-based current method which creates a 3D block with all patches similar to a reference patch, on which a threshold on the 3D transformed block is applied. Zhang et al. \cite{zhang2010two,gu2017weighted} replaced the  discrete cosine transform (DCT)  by an adaptive local linear transform, the principal component analysis (PCA), and achieves state-of-the-art performance in both quantitative PSNR measures and visual qualities.

 NSS refers to the fact that there are many repeated local patterns across a natural image and those nonlocal similar patches to a given patch can help much the reconstruction of it. The NSS-based image denoising algorithms such as BM3D \cite{dabov2007image},  LSSC \cite{mairal2009}, NCSR \cite{dong2013}, and WNNM\cite{gu2017weighted} have achieved state-of-the-art denoising results. However, it is still open to apply  NSS into color image inpainting, which considers the gross missing and random noise at the same time.   Meanwhile,  QMC  \cite{jia2018quaternion}  exactly reconstructs a color image of sufficiently small rank.  But, this is not true for a color image of high rank. Interestingly enough,  QMC still computes an optimal low-rank approximation to the high-rank color image, although sometime the quality of reconstruction is not very satisfactory.   This motivates us to apply the NSS prior to search similar patches of a single color image and gather them together to generate a commonly  low-rank  quaternion matrix  to be reconstructed by QMC.

The main contribution of this paper is in three aspects:
\begin{itemize}
\item   The working principle of NSS-based approaches is rigorously explained by introducing    numerical rank to quaternion matrices. The  NSS prior based on quaternion representation is first applied to  solve the  color image inpainting problem.   A new NSS-based   robust quaternion matrix completion method (NSS-based QMC) is proposed and is theoretically and experimentally proved efficient to reconstruct  color images from incomplete and corrupted entries.
\item  A novel tensor  NSS-based QMC method (TNSS-based QMC) is developed to solve the color video inpainting problem. Color videos are represented by quaternion tensors. TNSS-based QMC is in fact a novel quaternion tensor completion method.   It is feasible to recover  color  videos  from incomplete and corrupted tubes  (that is,  frontal slices miss pixels at the same positions).
\item The NSS-based and TNSS-based QMC methods are applied to inpaint natural color images and videos from incomplete and corrupted pixels and their performances  are  superior to the state-of-the-art methods on recovering the color and geometric properties such as color edges and textures.
\end{itemize}

This paper is organized as follows.
 In Section \ref{sec:pre}, we  briefly present necessary  information about quaternion matrices and recall the QMC model and theory.
 In Section \ref{sec:main}, we  propose a new NSS-based QMC method, including in the numerical rank analysis, the nonlocal self-similarity, and   a new NSS-based QMC algorithm.
 In Section \ref{sec:large}, we develop a new TNSS-based QMC method to solve the large-scale color image and video inpainting problems.
 In Section \ref{sec:ex}, we apply the NSS-based and TNSS-based QMC methods to practical inpainting problems of color images and videos.
 Finally, the concluding remark and further work are given in Section \ref{sec:con}.

\section{Preliminaries}\label{sec:pre}
In this section, we firstly recall the basic information of quaternion matrices and the robust quaternion matrix completion method \cite{jia2018quaternion}.

\subsection{Quaternion matrices}
Quaternion,  introduced by Hamilton \cite{Hamilton1866},
has one real part and three imaginary parts given by
$$
\aq = a_r + a_i {\bf i} + a_j {\bf j} + a_k {\bf k}
$$
where $a_r, a_i, a_j, a_k \in \mathbb{R}$ and ${\bf i}$, ${\bf j}$
and ${\bf k}$ are three imaginary units satisfying
$$
{\bf i}^2 = {\bf j}^2 = {\bf k}^2 = -1,\quad \iq\jq\kq=-1.
$$
Here a symbol of boldface indicates that it is a quaternion scalar, vector or matrix.
When $a_r = 0$, ${\bf a}$ is called a purely imaginary quaternion.
For simplicity, we denote $\mathbb{Q}$ by a set of quaternions. The set of $n_1\times n_2$ quaternion matrices are defined by
$$\mathbb{Q}^{n_1\times n_2}=\left\{\Aq=A_r+A_i\iq+A_j\jq+A_k\kq\right\}$$
in which $A_r,A_i,A_j,A_k\in\mathbb{R}^{n_1\times n_2}.$   $\Aq\in\mathbb{Q}^{n_1\times n_2}$ is called a purely imaginary quaternion matrix if its real part is zero ($A_r=0$).
The maximum
number of right linearly independent columns of a quaternion matrix
${\bf A} \in \mathbb{Q}^{n_1\times n_2}$ is called the rank of ${\bf A}$, denoted by $\rank(\Aq)$.
For any quaternion matrix $\Aq=[\aq_{ij}]\in\Qmn$,  we define
$$\texttt{absQ}(\Aq):=[|\aq_{ij}|]$$
 and
 $$\texttt{signQ}(\aq_{ij}):=\left\{\begin{array}{ll}
 \aq_{ij}/|\aq_{ij}|, & \text{if}~~|\aq_{ij}|\ne 0;\\
 0, &\text{otherwise.}
 \end{array}\right.$$
For any $\tau>0$,  the shrinkage of quaternion matrix $\Aq$ is defined by
$$ \texttt{shinkQ}(\Aq,\tau):=[\texttt{signQ}(\aq_{ij})\max(\texttt{absQ}(\aq_{ij})-\tau,0)]. $$
Suppose that  quaternion matrix $\Aq\in\mathbb{Q}^{n_1\times n_2}$ (with $n_1\ge n_2$) has the singular value decomposition
\begin{equation}\label{e:qsvd}
{\bf A} = {\bf U}  \Sigma {\bf V}^H,
\end{equation}
where   $$ \Sigma=\begin{bmatrix}
\sigma_1 &0&\cdots&0\\
0& \sigma_2&\cdots&0\\
\vdots&\vdots&\ddots&\vdots\\
0&0&\cdots&\sigma_{n_2}\\
\vdots&\vdots&\ddots&\vdots\\
0&0&0\cdots&0
\end{bmatrix}:=\texttt{diag}(\sigma_1,\cdots,\sigma_{n_2})$$ is a  diagonal (rectangle) matrix with $\sigma_1\ge\sigma_2\ge\cdots\ge\sigma_{n_2}\ge 0$,
and  ${\bf U} \in\mathbb{Q}^{n_1\times n_1}$ and  ${\bf V} \in\mathbb{Q}^{n_2\times n_2}$ are two unitary quaternion matrices \cite{Zhang1997}.
\begin{definition}[\bf Numerical Rank]\label{d:numerrank}
Let $\delta>0$.  A quaternion matrix $\Xq$ is called of $\delta$-rank $r$ if it has $r$ singular values bigger than $\delta$.
\end{definition}
 For any $\tau>0$, we define a low-rank approximation of $\Aq$ as
$$\texttt{approxQ}(A,\tau)=\Uq\texttt{diag}(\sigma_1,\cdots,\sigma_k,0,\cdots,0)\Vq^H,$$
where $\sigma_1\ge\cdots\ge\sigma_k>\tau$ and  the rest singular values of $\Aq$ that are smaller or equal to $\tau$ are replaced with zeros.

The inner product between two quaternion matrices
${\bf A}$ and ${\bf B}$ is defined by
\begin{equation}\left \langle
{\bf A}, {\bf B} \right \rangle = \tr( {\bf A}^H {\bf B}
),\label{new1}
\end{equation} where $\tr( {\bf A}^H {\bf B} )$ denotes the
trace of  $ {\bf A}^H {\bf B}.$ The matrix norms of $\Aq$ are defined by
the $\ell_1$ norm  $\left\Vert \Aq\right\Vert _{1} :=\sum\limits_{i=1}^{n_1}\sum\limits_{j=1}^{n_2}\left\vert
\aq_{ij}\right\vert$; the $\infty$-norm
$\left\Vert \Aq\right\Vert _{\infty } :=\max\limits_{i,j}\left\vert \aq_{ij}\right\vert$;
the F-norm $\left\Vert \Aq\right\Vert _{F}=\sqrt{\sum\limits_{i=1}^{n_1}\sum\limits_{j=1}^{n_2}\left\vert \aq_{ij}\right\vert ^{2}} :=\sqrt{ \tr\left( \Aq^H\Aq\right) }$;
the spectral norm $\|\Aq\|_2 :=\max \{\sigma_1,\cdots, \sigma_r\}$
and the nuclear norm
$\|{\bf A} \|_{*} := \sum\limits_{i=1}^r \sigma_i$, where $\sigma_1,\cdots, \sigma_r$ are all nonzero singular values of $\Aq$.

\subsection{The robust quaternion matrix completion method}
The  QMC method, recently proposed in \cite{jia2018quaternion},  aims to recover an $n_1$-by-$n_2$ quaternion matrix $\Lq_{0}$ from its partly known entries with noise.
Let  $\mathcal{P}_{\Omega}$ be the orthogonal projection onto the linear space of matrices supported on $\Omega\subseteq [1:n_{1}]\times [1:n_{2}]$,
$$
\mathcal{P}_{\Omega} ({\bf X}) = \left \{ \begin{array}{cc}
 {\bf x}_{i,j},& (i,j)\in \Omega,  \\
  0,  &(i,j)\notin \Omega.
\end{array}
\right.
$$
A few available data of $\Lq_{0}+\Sq_{0}$ in the subset $\Omega$
are written as
$
\Xq=\mathcal{P}_{\Omega}(\Lq_{0}+\Sq_{0}),
$
where  $\Sq_0$ is a sparse matrix containing the noise information. For each observed quaternion matrix $\Xq\in \Qmn$,  we consider the following robust  quaternion matrix completion model as in \cite{jia2018quaternion}  to recover the underlying quaternion matrix,
\begin{equation} \label{model}
\begin{array}{rl}
\underset{\bf L, \bf S}{\min}&
\left\Vert {\bf L} \right \Vert_{\ast} + \lambda \left\Vert {\bf S}\right\Vert _{1}\\
\ {\rm s.t. } &
\mathcal{P}_{\Omega}\left({\bf L}+{\bf S}\right)={\bf X}.
\end{array}
\end{equation}
From  \cite[Theorem 2]{jia2018quaternion}, the solution $\hat{\bf L}$ of (\ref{model}) with $\lambda=\frac{1}{\sqrt{\rho n_{(1)}}}$
 is exact, i.e.,
$\hat{\Lq}=\Lq_{0}$, provided that the rank of $\Lq_{0}$ is under a given bound and $\Sq_{0}$ is sufficiently sparse. Here, $\rho=|\Omega|/n_1n_2$ denotes the ratio of sampling and $n_{(1)}=\max\{n_{1},n_{2}\}$.
 Note that the optimal  balance parameter $\lambda$ is fixed, which sets us free from the hard work on parameter selection.  The prior assumption on the rank of $\Lq_0$ is one of motivations to write this paper.

The robust quaternion matrix completion can be applied into color image inpainting.
Essentially, the observed color image with the spatial resolution of $n_1 \times n_2$ pixels
is represented by an $n_1 \times n_2$ quaternion matrix ${\bf X}$ in ${\mathbb Q}^{n_1 \times n_2}$ as follows:
$ \Xq_{ij}
 = R_{ij} {\bf i} + G_{ij} {\bf j} + B_{ij} {\bf k}, \quad
1 \le i \le n_1, 1 \le j \le n_2,
$
where $R_{ij}$, $G_{ij}$ and $B_{ij}$ are the red, green and blue
pixel values respectively at the location $(i,j)$ in the image.
From Theorem 2 in \cite{jia2018quaternion}, the rank of the targeted color image (denoted by $\Lq_{0}$)  is smaller,  the possibility of exactly recovering it is higher.  However, the natural color images are not always of low rank. When the original $\Lq_0$ is of high rank,  the computed $\hat{\Lq}$ by the soft threshold method suggested in \cite{jia2018quaternion}  is not the exact solution anymore but  an optimal low-rank approximation of $\Lq_0$.  This undoubtedly reduce our expectation on reconstructions from observed color images with missing data.
This foundation motivates us to enhance the QMC method with the patching or non-local mean technique.

\section{Non-Local Robust Quaternion Matrix Completion} \label{sec:main}
In this section, we present a new non-local QMC approach and a fast patch-based QMC algorithm based on the ADMM frameworks.

\subsection{Model and Low-Rank Theory}
Now we  propose a NSS-based  framework for robust color images completion under quaternion representation,  called by NSS-based QMC.

Suppose an observed color image is expressed as a quaternion matrix ${\bf X}= R {\bf i} + G {\bf j} + B {\bf k}\in \mathbb{Q}^{  n_1 \times n_2}$,
where $R$, $G$ and $B\in \mathbb{R}^{  n_1 \times n_2}$ represent the red, green and blue
 components, respectively.
By moving  across the spatial domain with overlaps, we  build a group of  patches
\begin{equation}\label{e:Gpatches} \mathcal{G}=\{\Yq_{i,j}\in \mathbb{Q}^{ w \times h}\}\end{equation}
in which  $\Yq_{i,j}$ is a patch centred at $(i,j)$ of  color image $\Xq $ and
 $w\times h$ is the fixed patch size, $0<w\le n_1$ and $0<h\le n_2$.
Using a unit metric (say, the Euclidean distance with a threshold), the nonlocal similar patches are found from $\mathcal{G}$ and gathered into different subgroups.
The classic method to do so is  firstly setting $m$ key patches (covering the whole color image), and then finding a fixed number of  patches most similar to it by  block matching.
Mathematically,  the subgroup of similar patches to each given key patch $\Yq_{i,j}$ is
\begin{equation}\label{e:grouppatch}\mathcal{G}_{i,j} =\left\{ \Yq_{s,t}: f(\Yq_{s,t},\Yq_{i,j}) \leq \delta \right\}\!,\end{equation}
where $f$ is a distance  function between two patches, e.g.,
$f(\Yq_{s,t},\Yq_{i,j})$ $=\|\Yq_{s,t}-\Yq_{i,j}\|_F$,  and $\delta>0$ is a manual threshold.
It is practically convenient to gather a fixed number of similar patches in each subgroup. That is, we search a priorly decided number of similar patches, i.e., $|\mathcal{G}_{i, j}|=d$ with $d$ being a  positive integer.
Finally,  one quaternion matrix $\Xq_{i,j}\in \mathbb{Q}^{  (wh) \times d }$ is constructed  from each subgroup of patches, $\mathcal{G}_{i,j}$,  with ordering all elements lexicographically as a column vector, i.e,
\begin{equation}\label{e:Xk}
\Xq_{i,j}=[{\tt vec}(\Yq_{i_1,j_1}),\cdots,{\tt vec}(\Yq_{i_d,j_d})],
\end{equation}
where $\Yq_{i_k,j_k}$ denotes  the $k$-th element of $\mathcal{G}_{i,j}$. Here,   ${\tt vec}(\Yq_{i,j})$ means to stack all columns of quaternion matrix $\Yq_{i,j}$ into a quaternion vector.
The  diagram of  the construction is displayed in Fig.\ref{Fig:patchedidea}.
 It is interesting to observe that the singular values of $\Xq_{i,j}$ decay fast.
\begin{figure}
\begin{center}
\includegraphics[height=0.55\linewidth,width=1.0\linewidth]{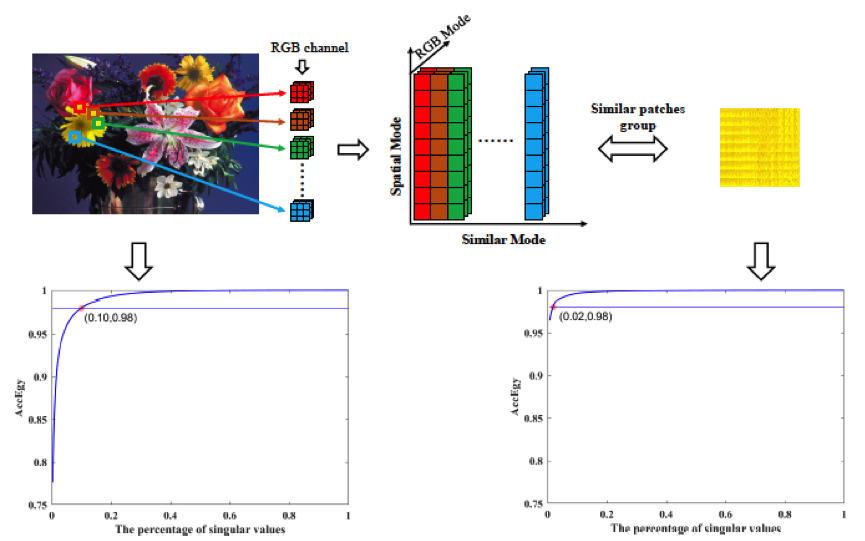}
\end{center}
\vspace{-0.6cm}
\caption{{\protect\small {The operating principle of the NSS-based QMC.  The  bottom two
graphs  are the accumulation energy ratios (AccEgy$=\sum_{i=1}^{k}\sigma_i^2/\sum_j\sigma_j^2$, where $\sigma_i$ denotes the $i$th singular value) of original color image $\Xq$ and  the patched matrix  $\Xq_{i,j}$.}}}
\label{Fig:patchedidea}
\end{figure}

Next, we apply the QMC algorithm (see Algorithm 1 in Section \ref{ss:admm}) on each     $\Xq_{i,j}$ defined by \eqref{e:Xk} to  recover the subgroup $\mathcal{G}_{i,j}$.   The aim is to  compute a low-rank reconstruction $\widehat{\Lq}_{i,j}$ by applying the QMC method on $\Xq_{i,j}$.  In the converse way of generating $\Xq_{i,j}$, we get the low-rank approximations to  patches from $\mathcal{G}_{i,j}$,  denoted by
 \begin{equation}\label{e:grouppatch2}\widehat{\mathcal{G}}_{i,j}=\left\{\widehat{\Yq}_{i_1,j_1},\cdots, \widehat{\Yq}_{i_d,j_d}\right\}\!.\end{equation}
By the averaging technique, a good approximation, denoted by $\widehat{\Lq}$,  to the original color image  is reconstructed  from $m$ reconstructed subgroups   $\widehat{\mathcal{G}}_{i,j}$'s.

Now we show that each  $\Xq_{i,j}$ generated by \eqref{e:Xk}  is of numerically low rank.
\begin{lemma}\label{t:deltarank}
Suppose that the  singular value decomposition  of  $\Xq=[\xq_1,\ldots,\xq_{n_2}]\in\Qmn$ with $n_1\ge n_2$ is
$$\Xq=\Uq\Sigma\Vq^H,$$
where $\Sigma=\texttt{diag}(\sigma_1,\sigma_2,\cdots,\sigma_{n_2}),~\sigma_{1}\ge\sigma_{2}\ge\cdots\sigma_{n_2}\ge 0.$
Denote $\Vq=[\vq_1,\cdots,\vq_{n_2}]$ and  let $r~(\le {n_2})$ be the least positive integer such that
\begin{equation}\label{e:r}
\sum\limits_{j=1}^{r-1}(\sigma_j^2-\sigma_r^2)|\wq_j|^2\ge\sum\limits_{j=r+1}^{n_2}(\sigma_r^2-\sigma_j^2)|\wq_j|^2
\end{equation}
holds for any two columns $\vq_i$ and $\vq_j$,  where $[\wq_1,\ldots,\wq_n]^T=\vq_i-\vq_j$.
For any given $\delta>0$, if  any two columns $\xq_i$ and $\xq_j$ satisfy
$$\|\xq_i-\xq_j\|_2\le \sqrt{2} \delta, $$
then $\sigma_{r}\le \delta$ and thus the $\delta$-rank of $\Xq$ is less than  $r$.
\end{lemma}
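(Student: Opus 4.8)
The plan is to pass to the singular basis of $\Xq$ and to recognise the defining inequality \eqref{e:r} as nothing but the assertion that a $|\wq_k|^2$-weighted average of the squared singular values dominates $\sigma_r^2$. Concretely, I would fix two distinct columns $\xq_i,\xq_j$ (so $i\ne j$) and write $\xq_i-\xq_j=\Xq(\eq_i-\eq_j)=\Uq\Sigma\Vq^H(\eq_i-\eq_j)=\Uq\Sigma\wq$, where $\eq_i,\eq_j$ are the $i$-th and $j$-th standard basis vectors and $\wq=\Vq^H(\eq_i-\eq_j)=[\wq_1,\dots,\wq_{n_2}]^T$ is the vector appearing in \eqref{e:r}. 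Since the diagonal entries $\sigma_k$ of $\Sigma$ are real they commute with quaternions, and the last $n_1-n_2$ rows of $\Sigma$ vanish; hence unitarity of $\Uq$ gives $\|\xq_i-\xq_j\|_2^2=\|\Sigma\wq\|_2^2=\sum_{k=1}^{n_2}\sigma_k^2|\wq_k|^2$, while unitarity of $\Vq$ together with $i\ne j$ gives $\sum_{k=1}^{n_2}|\wq_k|^2=\|\wq\|_2^2=\|\eq_i-\eq_j\|_2^2=2$.

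The second step is to re-read \eqref{e:r} algebraically. Moving all terms to the left-hand side and using $-(\sigma_r^2-\sigma_k^2)=\sigma_k^2-\sigma_r^2$ for $k>r$, with the $k=r$ summand equal to zero, \eqref{e:r} becomes equivalent to $\sum_{k=1}^{n_2}(\sigma_k^2-\sigma_r^2)|\wq_k|^2\ge 0$, that is, $\sum_{k=1}^{n_2}\sigma_k^2|\wq_k|^2\ge\sigma_r^2\sum_{k=1}^{n_2}|\wq_k|^2=2\sigma_r^2$. Combining this with the two identities above and with the hypothesis $\|\xq_i-\xq_j\|_2\le\sqrt{2}\,\delta$ gives
\[
2\sigma_r^2\ \le\ \sum_{k=1}^{n_2}\sigma_k^2|\wq_k|^2\ =\ \|\xq_i-\xq_j\|_2^2\ \le\ 2\delta^2 ,
\]
so $\sigma_r\le\delta$. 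Since $\sigma_r\ge\sigma_{r+1}\ge\cdots\ge\sigma_{n_2}$, at most $r-1$ singular values of $\Xq$ exceed $\delta$, and Definition~\ref{d:numerrank} then gives that the $\delta$-rank of $\Xq$ is at most $r-1<r$, as claimed.

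The heart of the proof is the purely algebraic re-reading of \eqref{e:r} in the second step; once that is in place everything else is routine. The only genuine care is in the quaternion bookkeeping: one must use ``$\Uq,\Vq$ unitary'' in the norm-preserving form $\Uq^H\Uq=I$, $\Vq\Vq^H=I$ (still valid over $\Qq$), and observe that no left/right multiplication ambiguity arises because $\Sigma$ has real entries. It is also worth noting that $r$ is well defined: for $r=n_2$ the right-hand side of \eqref{e:r} is an empty sum and its left-hand side is nonnegative, so an admissible $r\le n_2$ always exists and one takes the least such $r$.
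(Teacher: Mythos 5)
Your proof is correct and follows essentially the same route as the paper's: both reduce $\|\xq_i-\xq_j\|_2^2$ to $\sum_k\sigma_k^2|\wq_k|^2$ via unitary invariance, use that the relevant difference vector has squared norm $2$, and read \eqref{e:r} as saying $\sum_k\sigma_k^2|\wq_k|^2\ge 2\sigma_r^2$ (equivalently, $\|\sigma_r\wq\|_2\le\|\Sigma\wq\|_2$). If anything yours is the more careful write-up, since you take $\wq=\Vq^H(\eq_i-\eq_j)$, which is what the factorization $\xq_i-\xq_j=\Uq\Sigma\Vq^H(\eq_i-\eq_j)$ actually produces, whereas the paper loosely identifies this with the column difference $\vq_i-\vq_j$.
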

\begin{proof} Let $\vq_i$ and $\vq_j$ be two different columns of unitary matrix $\Vq$. Then $\vq_i^H\vq_i=\vq_j^H\vq_j=1$ and $\vq_i^H\vq_j=\vq_j^H\vq_i=0$. Thus,
$\|\vq_i-\vq_j\|_2^2=(\vq_i-\vq_j)^H(\vq_i-\vq_j)=\vq_i^H\vq_i-\vq_i^H\vq_j-\vq_j^H\vq_i+\vq_i^H\vq_i=2$.
The proof directly follows the facts that
$\|\vq_i-\vq_j\|_2=\sqrt{2}$,
$$\|\xq_i-\xq_j\|_2=\|\Uq(\xq_i-\xq_j)\|_2=\|\Sigma(\vq_i-\vq_j)\|_2,$$
and that  if $r$ satisfies \eqref{e:r} then
$\|\sigma_r(\vq_i-\vq_j)\|_2\le\|\Sigma(\vq_i-\vq_j)\|_2.$
\end{proof}

Suppose the distance function for each patch group $\mathcal{G}_{i,j}$ defined as in \eqref{e:grouppatch} 
  is uniformly defined as $f(\Yq_{s,t},\Yq_{i,j})=\|\Yq_{s,t}-\Yq_{i,j}\|_F=\|{\tt vec}(\Yq_{s,t})-{\tt vec}(\Yq_{i,j})\|_2$. Lemma \ref{t:deltarank} can be used to analysis the numerical low-rank property of each key patching matrix $\Xq_{i,j}$.
Note that $\Xq_{i,j}$  can be seen as a sum of a low-rank matrix and a perturbation,
\begin{equation}\label{e:Xk2}
\Xq_{i,j}=P_{\Omega_{i,j}}(\Lq_{i,j}+\Sq_{i,j}),
\end{equation}
where $\Lq_{i,j}$ and $\Sq_{i,j}$ are the patch matrices of original image $\Lq_0$ and sparse noise matrix $\Sq_0$, respectively, and $\Omega_{i,j}$ is the patched region of available data.

\begin{theorem}\label{cor:deltarank}
Given $\delta>0$,  suppose each  subgroup $\mathcal{G}_{i,j}$ is generated by \eqref{e:grouppatch} under the metric
$$\|\Yq_{s,t}-\Yq_{i,j}\|_F\le \frac{\sqrt{2}}{2}\delta, $$
 and the associated patching matrix $\Xq_{ij}\in \mathbb{Q}^{  (wh) \times d }$ 
 (with $wh\ge d$)
has  the singular value decomposition,
\begin{equation}\label{e:svdXij}\Xq_{i,j}=\Uq_{i,j}\Sigma_{i,j}\Vq_{i,j}^H,
\end{equation}
where
$\Sigma_{ij}=\texttt{diag}(\sigma_1,\sigma_2,\cdots,\sigma_d),~\sigma_1\ge\sigma_2 \cdots\ge\sigma_{d}\ge 0.$
Let $r_{i,j}~(\le {d})$ be the least positive integer such that
\begin{equation}\label{e:rij}
\sum\limits_{k=1}^{{r_{i,j}}-1}(\sigma_{k}^2-\sigma_{r_{i,j}}^2)|\wq_{k}|^2\ge \sum\limits_{k=r_{i,j}+1}^d (\sigma_{r_{i,j}}^2-\sigma_k^2)|\wq_k|^2,
\end{equation}
where $[\wq_1,\ldots,\wq_d]^T=\vq_i-\vq_j$,
holds for any two columns of  $\Vq_{i,j}=[\vq_1,\cdots,\vq_{d}]$.
Then $\sigma_{r_{i,j}}\le \delta$ and thus the $\delta$-rank of $\Xq_{i,j}$ is less than  $r_{i,j}$.
\end{theorem}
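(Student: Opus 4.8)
The plan is to deduce Theorem \ref{cor:deltarank} as a direct application of Lemma \ref{t:deltarank} to each patching matrix $\Xq_{i,j}$, with the roles of $n_1,n_2$ in the lemma played by $wh$ and $d$. The only genuine work is to convert the distance bound that defines the subgroup $\mathcal{G}_{i,j}$ (a bound on the distance of each member patch to the \emph{key} patch $\Yq_{i,j}$) into a uniform bound on the distances between \emph{arbitrary pairs} of columns of $\Xq_{i,j}$, which is exactly the hypothesis required by Lemma \ref{t:deltarank}.

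First I would recall from \eqref{e:Xk} that the columns of $\Xq_{i,j}$ are precisely ${\tt vec}(\Yq_{i_1,j_1}),\ldots,{\tt vec}(\Yq_{i_d,j_d})$ with each $\Yq_{i_k,j_k}\in\mathcal{G}_{i,j}$, and that for vectorized patches the Frobenius distance coincides with the Euclidean distance of the corresponding columns,
\[
\|\Yq_{s,t}-\Yq_{s',t'}\|_F=\|{\tt vec}(\Yq_{s,t})-{\tt vec}(\Yq_{s',t'})\|_2 .
\]
By the triangle inequality and the defining metric $\|\Yq_{s,t}-\Yq_{i,j}\|_F\le\tfrac{\sqrt2}{2}\delta$, any two member patches of $\mathcal{G}_{i,j}$ satisfy
\[
\|\Yq_{s,t}-\Yq_{s',t'}\|_F\le\|\Yq_{s,t}-\Yq_{i,j}\|_F+\|\Yq_{i,j}-\Yq_{s',t'}\|_F\le\tfrac{\sqrt2}{2}\delta+\tfrac{\sqrt2}{2}\delta=\sqrt2\,\delta ,
\]
so every pair of columns $\xq_k,\xq_l$ of $\Xq_{i,j}$ obeys $\|\xq_k-\xq_l\|_2\le\sqrt2\,\delta$.

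With this in hand I would apply Lemma \ref{t:deltarank} verbatim to $\Xq_{i,j}$: its singular value decomposition is \eqref{e:svdXij}, condition \eqref{e:rij} is exactly condition \eqref{e:r} rewritten for the singular values $\sigma_1\ge\cdots\ge\sigma_d$ of $\Xq_{i,j}$ and the columns of $\Vq_{i,j}$, and $r_{i,j}$ is by definition the least integer for which it holds. The lemma then yields $\sigma_{r_{i,j}}\le\delta$; since the singular values are non-increasing, $\Xq_{i,j}$ has at most $r_{i,j}-1$ singular values exceeding $\delta$, and by Definition \ref{d:numerrank} its $\delta$-rank is less than $r_{i,j}$.

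I do not anticipate a serious obstacle: the argument is essentially bookkeeping, and the one point requiring care is the constant — the metric in the theorem is tightened to $\tfrac{\sqrt2}{2}\delta$ precisely so that the triangle inequality restores the $\sqrt2\,\delta$ pairwise bound demanded by Lemma \ref{t:deltarank}. I would also remark that the decomposition \eqref{e:Xk2}, $\Xq_{i,j}=P_{\Omega_{i,j}}(\Lq_{i,j}+\Sq_{i,j})$, plays no role in the rank estimate itself; it matters only for the subsequent QMC recovery step, so the conclusion holds for the observed patching matrix as stated.
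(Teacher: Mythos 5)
Your proposal is correct and follows essentially the same route as the paper: the paper likewise invokes Lemma \ref{t:deltarank} and uses the triangle inequality on the key-patch distance bound $\tfrac{\sqrt{2}}{2}\delta$ to obtain the pairwise column bound $\sqrt{2}\,\delta$ required by the lemma. Your additional remarks (the identification of Frobenius and Euclidean distances under vectorization, and the observation that \eqref{e:Xk2} is irrelevant to the rank estimate) are accurate and, if anything, make the argument slightly more explicit than the paper's.
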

\begin{proof} By Lemma \ref{t:deltarank}, we only need to show that the distance between any two columns of $\Xq_{i,j}$ is less than or equal to $\sqrt{2}$.
Since each element of $\mathcal{G}_{i,j}$ satisfy $\|\Yq_{s,t}-\Yq_{i,j}\|_F\le \frac{\sqrt{2}}{2}\delta$,
 $\|\Yq_{s_1,t_1}-\Yq_{s_2,t_2}\|_F
 =\|\Yq_{s_1,t_1}-\Yq_{i,j}+\Yq_{i,j}-\Yq_{s_2,t_2}\|_F\le \|\Yq_{s_1,t_1}-\Yq_{i,j}\|_F+\|\Yq_{i,j}-\Yq_{s_2,t_2}\|_F\le \sqrt{2}\delta$
 holds for any two columns ${\tt vec}(\Yq_{s_1,t_1})$ and ${\tt vec}(\Yq_{s_2,t_2})$ of $\Xq_{i,j}$.
\end{proof}

\subsection{Practical QMC algorithm}
\label{ss:admm}
Now we are at the position 
to solve the resulting QMC problem corresponding to each $\Xq_{i,j}$ defined by \eqref{e:Xk}:
\begin{equation}\label{e:qmcpatchmodel}
\begin{aligned}
\min_{\Lq,\Sq}&~\|\Lq\|_{\ast}+\lambda\|\Sq\|_1\\
\rm{s.t.}&~\mathcal{P}_{\Omega_{i,j}}(\Lq+\Sq)=\Xq_{i,j}.
  \end{aligned}
\end{equation}
 For the convenience of presentation, we omit the subscripts of $\Xq_{i,j}$ and $\Omega_{i,j}$ in \eqref{e:qmcpatchmodel}, and use $\Xq$ and $\Omega$ instead.
 
To solve \eqref{e:qmcpatchmodel}, we present a practical QMC algorithm under the ADMM framework with theoretical convergence guarantee \cite{boyd2011distributed}.
 Firstly, we reformulate \eqref{e:qmcpatchmodel} as the following equivalent optimization problem by introducing two auxiliary variables  $\Pq$ and ${\bf Q}$,
\begin{equation} \label{cmodel}
\begin{array}{rl}
\underset{{\bf L}, {\bf S}, {\bf P}, {\bf Q}}{\min}&
\left\Vert {\bf L} \right \Vert_{\ast} + \lambda \left\Vert {\bf S}\right\Vert _{1}\\
\ {\rm s.t.} &
\mathcal{P}_{\Omega}\left({\bf P}+{\bf Q}\right)={\bf X}, {\bf L}={\bf P}, {\bf S}={\bf Q}.
\end{array}
\end{equation}
 The augmented
Lagrangian function is given as follows by attaching multipliers ${\bf Y}$ and ${\bf Z}$:
\begin{equation} \label{ucmodel}
\begin{array}{rl}
\underset{{\bf L}, {\bf S}, {\bf P}, {\bf Q}}{\min}&
\left\Vert {\bf L} \right \Vert_{\ast} + \lambda \left\Vert {\bf S}\right\Vert _{1}+\frac{\mu}{2} \left\Vert {\bf L-\bf P+\bf Y/\mu}\right\Vert _{F}^2\\
&+\frac{\mu}{2} \left\Vert {\bf S-\bf Q+\bf Z/\mu}\right\Vert _{F}^2\\
\ {\rm s.t.} &
\mathcal{P}_{\Omega}\left({\bf P}+{\bf Q}\right)={\bf X},
\end{array}
\end{equation}
where  $\mu$ is the penalty parameter for linear constraints to be satisfied. Such joint minimization problem can be decomposed into two easier
and smaller subproblems such that two groups of variables $[\bf{L},\bf{Q}]$ and $[\bf{S},\bf{P}]$  can
be minimized in an alternating order, followed by the update of multipliers  ${\bf Y}$ and ${\bf Z}$. 
This strictly follows the ADMM framework in  \cite{boyd2011distributed}, and thus,  the theoretical convergence is guaranteed.  Moreover,   $\bf{L}$ and $\bf{Q}$ are decoupled in the $[\bf{L},\bf{Q}]$ subproblem, and they  can be  solved separately. It is also true for   $\bf{S}$ and $\bf{P}$ in the $[\bf{S},\bf{P}]$ subproblem.  The detailed analysis is presented in the supplementary material.

We summarize  above steps of solving \eqref{e:qmcpatchmodel}  into Algorithm 1,   in which a quaternion matrix  $\Xq=X_0+X_1\iq+X_2\jq+X_3\kq$ is stored as $X=[X_0~X_1~X_2~X_3]$.  The computational cost of Algorithm 1 at per iteration is $O(n_1 n_2+\min (n_1n_2^2,n_1^2 n_2))$ for each  observed quaternion matrix $\Xq\in \Qmn$. 
\begin{algorithm}\label{a:qmc_admm}{\bf Algorithm 1 (QMC Algorithm).}
Given an observed quaternion matrix $\Xq=X_0+X_1\iq+X_2\jq+X_3\kq\in\Qmn$, this algorithm computes a low-rank quaternion matrix    $\Lq=L_0+L_1\iq+L_2\jq+L_3\kq$  and a sparse quaternion matrix $\Sq=S_0+S_1\iq+S_2\jq+S_3\kq$  satisfying  \eqref{e:qmcpatchmodel}.
\begin{enumerate}
\item[$1.$] Input: $X=[X_0~X_1~X_2~X_3]$;  the set of the indexes of known pixels $\Omega$; the set of the indexes of unknown pixels $\overline{\Omega}$; $L = X$; $S=P=Q=Y=Z=\texttt{zeros}(n_1, 4*n_2)$; $\mu,\lambda>0$.
\item[$2.$] While not converge
\item[$3.$] \quad  Update $L$ and  $Q$:
 \item[]\qquad   $L = \texttt{approxQ}(P -  (1/\mu)*Y, 1/\mu)$;
 \item[]\qquad   $Q(\Omega)=X(\Omega)-P(\Omega)$; \quad
             $Q(\overline{\Omega})=S(\overline{\Omega})+Z(\overline{\Omega})/\mu$;
\item[$4.$] \quad  Update $S$ and $P$:
\item[] \qquad $S =\texttt{shinkQ}(Q-  (1/\mu)*Z, \lambda/\mu)$;
\item []       \qquad  $P(\Omega)=(\mu L(\Omega)+\mu X(\Omega)-\mu S(\Omega)+Y(\Omega)-$
\item[]         \qquad $Z(\Omega))/2/\mu$;  \quad $P(\overline{\Omega})=L(\overline{\Omega})+Y(\overline{\Omega})/\mu$;
\item[$5.$]  \quad  Update $Y$ and $Z$:
\item[]         \qquad  $Y=Y+\mu*(L-P)$;   $Z = Z + \mu*(S-Q)$;
\item[$6.$] end
\end{enumerate}
\end{algorithm}

\subsection{NSS-based QMC Algorithm}
Based on above analysis, we propose the NSS-based QMC algorithm, whose  pseudo code is proposed in  Algorithm 2.

\begin{algorithm}\label{a:pqmc}{\bf Algorithm 2 (NSS-based QMC  Algorithm).}
Given an observed color image $\Xq=X_0+X_1\iq+X_2\jq+X_3\kq\in\Qmn$ with $X_0\equiv 0$, this algorithm computes a   reconstruction    $\widehat{\Lq}=\widehat{L}_0+\widehat{L}_1i+\widehat{L}_2j+\widehat{L}_3k$ with $\widehat{L}_0\equiv 0$ and $\widehat{L}_1,~\widehat{L}_2,~\widehat{L}_3$ denoting the red, green, blue color channels, respectively.
\begin{enumerate}
\item[$1.$]  Input:  The color image  $X=[X_0~X_1~X_2~X_3]$;  the set of the indexes of known pixels $\Omega$;  the patch size  $w\times h$ with $0<w\le n_1$ and $0<h\le n_2$; the number of key patches $m$; a tolerance  $\delta>0$;  the dimension of the patching group $d$.
\item[$2.$] While not converge
\item[$3.$]  \qquad Generate the group of  patches $\mathcal{G}$ as in \eqref{e:Gpatches}. Choose $m$  key patches  $\Yq_{i,j}$ with $(i,j)\in\Omega$.
\item[$4.$]   \qquad  Search   $d$  similar patches to  $\Yq_{i,j}$ and gather them into $\mathcal{G}_{i,j}$. Generate  a quaternion matrix $\Xq_{i,j}$ as in \eqref{e:Xk}.
\item[$5.$]  \qquad   Apply the QMC algorithm  (Algorithm 1) to compute  a  low-rank approximation $\widehat{\Lq}_{i,j}$ of $\Xq_{i,j}$.   Reconstruct the  subgroups of  approximation patches $\widehat{\mathcal{G}}_{i,j}$.
\item[$6.$] \qquad   Compute the reconstruction $\widehat{\Lq}$  from $m$ reconstructed subgroups   $\widehat{\mathcal{G}}_{i,j}$.
\item[$7.$]  end

\end{enumerate}
\end{algorithm}

In Algorithm 2,  the overlapped patches are averaged with weights generated by the similarity  to key patch; see \eqref{e:patchsimilar} for instance.
It is satisfied that  $\|{\tt vec}(\Yq_{i_s,j_s})-{\tt vec}(\Yq_{i,j})\|_2=\|\Yq_{i_s,j_s}-\Yq_{i,j}\|_F\le \frac{\sqrt{2}}{2}\delta$.  (By Lemma \ref{t:deltarank},     the $\delta$-rank of $\Xq_{i,j}$ is less than  $r$ which is the index of the first singular value such that $\sigma_r(\Xq_{i,j})\le \delta$.)
 From Theorem \ref{cor:deltarank}, it is possible to get  low $\delta$-rank quaternion matrices,  $\Xq_{i,j}$'s, in the NSS-based QMC by setting small $\delta$.  This is indicated in Fig.\ref{Fig:patchedidea}.
From Theorem 2 in \cite{jia2018quaternion}, the lower  the rank  of the matrix is, the accurate the matrix is recovered by QMC.  The idea of NSS-based QMC provides a method to construct low-rank matrices, on which the QMC is applied.
\begin{remark}
Finding the  patches similar to a given  patch by block matching,  NSS-based QMC and BM3D  group them into  a quaternion matrix and a third order tensor, respectively. The
 differences are in  two folds: (1) NSS-based QMC is for robust completion and BM3D is
 for denoising; (2) the redundancy of similar patches is characterized by
 the low-rankness of the quaternion matrix in NSS-based QMC and the redundancy of similar patches is characterized  by  the sparsity of the transformed third-order tensor in BM3D.

So we will not compare NSS-based QMC with  BM3D \cite{dabov2007image} in our numerical experiments since it is not appropriate for solving color image inpainting problem.
In fact, if BM3D without modification is  applied into Example \ref{e:ex1}  then the obtained reconstructions will contain unknown pixels and some noise.
 \end{remark}

\section{Large-scale Color Image and  Video Inpainting}\label{sec:large}
The NSS-based QMC method can not be directly applied to handle the large-scale color video inpainting problem.  In this section, a new tensor NSS-based QMC (TNSS-based QMC)  method is presented for  color video inpainting based on the quaternion tensor representation.

 At first, we present a new TNSS-based QMC algorithm for color video inpainting.  Under the quaternion representation, a color video is represented by a third-order quaternion tensor, whose slices are quaternion matrices. When the scale of each slice of color video is huge, it is practical to set a  window for searching similar patches. 
For a given size $D>0$,  the searching window corresponding to the key patch $\Yq_{i,j}$ with the centre $(i,j)$ is defined by
\begin{equation}\label{e:ND}
\mathcal{N}_D(i,j) =\left\{ (s,t): \|(s,t)-(i,j)\|_{\infty} \leq \frac{D-1}{2} \right\},
\end{equation}
where  $\| \cdot \|_{\infty}  $ denotes the sup norm:
$\| (s,t) - (i,j)\|_{\infty} = \max \{ |s-i|,|t-j| \} $.
In other words, $\mathcal{N}_D(i,j)$ is the centre index set of patches 
 with
the centre $(s,t)$  near to  $(i,j)$. 

Let  $\mathcal{X}\in\mathbb{Q}^{n_1\times n_2 \times n_3}$  be a third-order  quaternion tensor and let its $k$-th frontal slice be  $\Xq(k):=\mathcal{X}(:,:,k)\in\mathbb{Q}^{n_1\times n_2}$, $k=1,\cdots,n_3$.
  Once the key patch $\Yq_{i,j}(k)$ and the searching window $\mathcal{N}_D(i,j)$  on the $k$-th slice $\mathcal{X}(:,:,k)$ is set, we use the same index $(i,j)$ of the key patch  and the size of searching window to other slices, and search the similar patches  on all slices.
 More specifically,
let the patches in the fixed searching window on the $k$-th slice be grouped in
 \begin{equation}\label{e:gDtensor}
\mathcal{G}^D(k)= \left\{ \Yq_{s,t}(k): (s,t)\in \mathcal{N}_D(i,j) \right\}\!.
\end{equation}
We search $d$ patches that are similar to the key patch $\Yq_{i,j}(k)$ from $\mathcal{G}^D(1),\cdots, \mathcal{G}^D(n_3)$ and gather them into
\begin{equation}\label{e:grouppatch_tensor}
\mathcal{G}_{i,j}^D(k)= \left\{ \Yq_{i_1,j_1}(k_1),\cdots,\Yq_{i_d,j_d}(k_d) \right\}\!,
\end{equation}
where $(i_s,j_s)\in \mathcal{N}_D(i,j)$ and $1\le k_s\le n_3$, $s=1,\cdots,d$.
Then  a quaternion matrix $\Xq^D_{i,j}(k)\in \mathbb{Q}^{  (wh) \times d }$  is generated by
\begin{equation}\label{e:XkDtensor}
\Xq_{i,j}^D(k)=[{\tt vec}(\Yq_{i_1,j_1}(k_1)),\cdots,{\tt vec}(\Yq_{i_d,j_d}(k_d))].
\end{equation}
Then the rest steps are similar to the NSS-based QMC algorithm.  The indication is shown in Fig.\ref{Fig:tensor} and the pseudo code is proposed in Algorithm $3$. Here, the superscript `$~\widehat{}~$' has the similar meaning as in Algorithm $2$.

\begin{figure}[!h]
\begin{center}
\includegraphics[width=0.85\linewidth,height=0.55\linewidth]{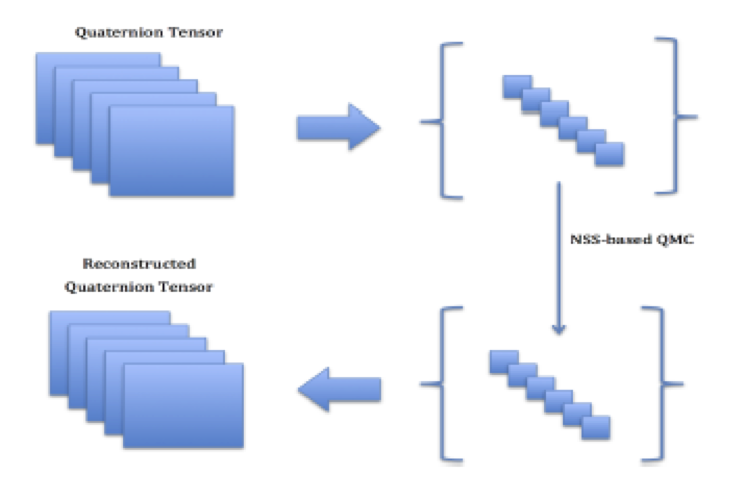}
\end{center}
\caption{{\protect\small {The operating principle of the TNSS-based QMC on quaternion tensor reconstruction.  }}}
\label{Fig:tensor}
\end{figure}

\begin{algorithm}\label{a:pqmc_TNSS}{\bf Algorithm 3 (TNSS-based QMC  Algorithm).}
Given an observed color video $\mathcal{X}\in\mathbb{Q}^{n_1\times n_2 \times n_3}$, this algorithm computes a reconstruction  $\widehat{\mathcal{L}}\in\mathbb{Q}^{n_1\times n_2 \times n_3}$.
\begin{enumerate}
\item[$1.$]  Input:  The color video  $\mathcal{X}\in\mathbb{Q}^{n_1\times n_2 \times n_3}$;  the set of the indexes of known pixels $\Omega$ on the $k$-th frontal slice; the patch size  $w\times h$ with $0<w\le n_1$ and $0<h\le n_2$; the number of key patches $m$; a tolerance  $\delta>0$;  the dimension of the patching group $d$; the size of searching window $D>0$.
\item[$2.$] While not converge
\item[$3.$]  \qquad Choose the  key patch  $\Yq_{i,j}(k)$ and set the searching window $\mathcal{N}_D(i,j)$ on the $k$-th slice.
 Generate a group of  patches $\mathcal{G}^D(k)$  as  in \eqref{e:gDtensor}.

\item[$4.$]   \qquad  Search   $d$  similar patches as in \eqref{e:grouppatch_tensor}. Generate  a quaternion matrix $\Xq_{i,j}^D(k)$ as in \eqref{e:XkDtensor}.
\item[$5.$]  \qquad   Apply the QMC algorithm  (Algorithm 1) to compute  a  low-rank approximation $\widehat{\Lq}_{i,j}^D(k)$ of $\Xq_{i,j}^D(k)$.   Reconstruct the  subgroups of  approximation patches $\widehat{\mathcal{G}}_{i,j}^D(k)$.
\item[$6.$] \qquad   Compute the reconstruction $\widehat{\mathcal{L}}$  from $m$ reconstructed subgroups   $\widehat{\mathcal{G}}_{i,j}^D(k)$.
\item[$7.$]  end

\end{enumerate}
\end{algorithm}

TNSS-based QMC is in fact a novel quaternion tensor completion method. The corresponding exact recovery theory can be similarly  built to the QMC theory in \cite{jia2018quaternion}.  This method is feasible to solve the color video inpainting problems with incomplete and corrupted pixels.
Especially, it excels at dealing with one of the challenging  problems,
 i.e., all frames of  the color video miss pixels at the same positions.
That is, TNSS-based QMC can efficiently reconstruct quaternion tensors  from incomplete and corrupted tubes.
  Although difficult to prove theoretically, this capability of TNSS-based QMC obviously  relies on two facts: the quaternion representation (each color pixel is represented by a purely quaternion)  can fully preserve color information \cite{jia2018quaternion} and the NSS technique leads to low-rank prior (see Theorem \ref{cor:deltarank}).    We will perform numerical verification in Example V.4.

The  strategy of searching similar patches in Algorithm 3 can also be introduced into Algorithm 2 to tackle the large-scale color image inpainting problem.
  For a given key patch $Y_{i,j}$ and a searching window $\mathcal{N}_D(i,j)$ defined in \eqref{e:ND},
 we find $d$  patches  similar to the key patch  from
\begin{equation*}
\mathcal{G}^D= \left\{ \Yq_{s,t}: (s,t)\in \mathcal{N}_D(i,j) \right\},
\end{equation*}
 and denote them by
$$\mathcal{G}_{i,j}^D= \left\{ \Yq_{i_1,j_1},\cdots,\Yq_{i_d,j_d}\right\}\!,$$
where $(i_s,j_s)\in \mathcal{N}_D(i,j)$, $s=1,\cdots, d$.
 Then we  construct  a quaternion matrix 
\begin{equation}\label{e:XkD}
\Xq_{i,j}^D=[{\tt vec}(\Yq_{i_1,j_1}),\cdots,{\tt vec}(\Yq_{i_d,j_d})]\in \mathbb{Q}^{  (wh) \times d }.
\end{equation}
Replacing  $\Xq_{i,j}$ in Step 4 of Algorithm $2$ with $\Xq_{i,j}^D$, we get the NSS-based QMC algorithm with searching window, which can handle the large-scale color image inpainting problem.

To enhance the visual comfort of the reconstructed color image, we usually  select  searching windows with a proper size (decided by experiments), and the patches are  often reconstructed more than once.
 The similarity of two patches is estimated  by weighted Euclidean distance as
follow
\begin{eqnarray}
\nonumber \|\Yq_{s,t} - \Yq_{i,j}\|_{a}^{2} =\qquad\qquad\qquad\qquad\qquad\\
\label{e:patchsimilar}
\frac{\sum\limits_{(\varsigma,\vartheta)\in \mathcal{N}_D(0,0)} a(\varsigma,\vartheta)\|\Yq_{s+\varsigma, t+\vartheta}
 - \Yq_{i+\varsigma, j+\vartheta}\|_2^{2} }{\sum\limits_{(\varsigma,\vartheta)\in \mathcal{N}_D(0,0)} a(\varsigma,\vartheta)},
\end{eqnarray}
with $a(\varsigma,\vartheta)>0$ being some fixed weights   (see, e.g., \cite{buades2005review,buades2005non}).
This similarity suggests a weight to the reconstruction of patch $\Yq_{s,t}$ at the step of applying the searching window   $\mathcal{N}_D(i,j)$.

\begin{remark}
TNSS-based QMC is different to recently proposed methods:  LRQA \cite{chen2019} and LRQTC \cite{miao2020}.  LRQA \cite{chen2019}  is proposed based on nonconvex functions and is applied to each frame of color video.
In LRQTC \cite{miao2020},  the global low-rank prior to quaternion tensor is encoded as the nuclear norm of  unfolding quaternion matrices of large-scale. In TNSS-based QMC, the low-rank prior is of small-scale quaternion matrices generated by similar neighbourhood patches from  $\mathcal{G}_{i,j}^D(k)$ and the local prior information in searching window are  from  all frames.
\end{remark}

\begin{remark}
TNSS-based QMC is also different to  the  tensor methods proposed in \cite{jing17,liu2019,Johnttmac,zhang2021,CY2016,zhao2022,yang2020}, which have been applied to the restoration of color images and videos.
In the tensor methods,  a color image is represented by a third-order tensor and  a color video is represented by a fourth-order tensor, and their entries are real numbers which denote the values of red, green or blue channels.  The operation process follows the real number operation rules and  the relationship of  the values of three color channels of the same color pixel is often ignored.  They apply the low-rank prior of real algebraic  structures.
Differently,  TNSS-based QMC  processes the quaternion operation and always operates the values of the three color channels of the same color pixel as a whole by encoding them into a quaternion \cite{jia2018quaternion}.  In numerical experiments,  we will make a comparison with several tensor methods for which the corresponding codes can be found.
\end{remark}

\section{Numerical Examples}\label{sec:ex}
In this section, we apply the newly proposed NSS-based QMC and TNSS-based QMC approaches  to  solving the inpainting problems of color images and videos.  We compare our approaches with the state-of-the-art methods in the literature.  All these experiments were performed in MATLAB on a personal computer with 2.4 GHz Intel Core i7  processor and 8 GB 1600MHz DDR3 memory.

\begin{example}[{\bf Color Image Inpainting}]
\label{e:ex1}
In this experiment, we apply the proposed  NSS-based QMC approach to color image inpainting  from incomplete and corrupted entries,   and   compare it with  four well-known methods:
\begin{itemize}
\item  QMC--Quaternion matrix completion  algorithm  \cite{jia2018quaternion}.
\item  TC--Tensor completion  algorithm  \cite{jing17}.
\item  TMac$\_{}$TT--A multilinear matrix factorization model to approximate the tensor train rank of a tensor \cite{Johnttmac}.
\item  LRQA-L--Low-rank quaternion approximation with laplace function \cite{chen2019}.
\item  LRQMC--Low-rank quaternion matrix completion algorithm \cite{jk2022}.
\end{itemize}

\begin{table}
\caption{PSNR and SSIM values of reconstructions by QMC, TC,
TMac$\_{}$TT, LRQA-L, LRQMC, and NSS-based QMC.} \label{tab:qmcVSpqmc}
\begin{threeparttable}
\begin{center}
\renewcommand{\arraystretch}{1.1} \vskip-3mm {\fontsize{8pt}{\baselineskip}%
\selectfont \scalebox{0.90}{

\begin{tabular}{c|c|cc|cc|cc|cc}
\hline
Images    & \multirow{2}{*}{ \diagbox{Methods}{$(1-\rho,\gamma)$}}  &\multicolumn{2}{c|}{ $(10\%,0\%)$ } &\multicolumn{2}{c|}{ $(50\%,0\%)$ }& \multicolumn{2}{c}{ $(50\%,10\%)$ }& \multicolumn{2}{c}{ $(0\%,10\%)$ } \\
   (Sizes)         &  &PSNR&SSIM &PSNR&SSIM&PSNR&SSIM&PSNR&SSIM \\ \hline\hline
  \multirow{5}{*}{
  $ \begin{array}{c}{\rm F16}\\
   (512\times 512)\\
   \end{array}$
   }
&QMC
            &   29.63  & 0.9580  &   27.91  &  0.9199 &  26.92 &   \underline{0.8861} & 29.03&0.9575 \\

&TC
            &   31.51  &  0.9667  &   29.14  &  0.9255 &  \underline{27.74} &   0.8786 &\underline{30.93} &\underline{0.9740}\\
&TMac$\_{}$TT
            &   \underline{38.88} & \underline{0.9968}  &   \underline{32.10}  &  {\bf 0.9798}  &  24.22 &   0.7973 &23.41 &0.7894\\
&LRQA-L\tnote{1}
            &   32.58  &  0.9713  &   31.78  &  \underline{0.9682} &  18.08 &   0.4928 &25.94 &0.7840\\
&LRQMC
            &   {\bf 42.11}  &  {\bf 0.9972}  &   31.54  &  0.9583 &  24.21 &   0.5030 &24.82 &0.6044\\
&NSS-based QMC
            & 34.25    &  0.9756    &  {\bf 32.59}  &  0.9640  & {\bf 31.64}  &  {\bf 0.9542} & {\bf 33.86} &{\bf 0.9819}\\
\hline
 \multirow{5}{*}{
  $ \begin{array}{c}{\rm Pepper}\\
   (512\times 512)\\
   \end{array}$
   }
&QMC
            &  30.05  &  0.9827  & 28.27   & \underline{0.9721} &  \underline{27.29}   & \underline{0.9637}&\underline{29.75} &\underline{0.9555} \\
&TC
            &  29.60  &  0.9794  & 27.99   & 0.9691 &  27.13   & 0.9615 &29.49 &0.9539\\
&TMac$\_{}$TT
            &   \underline{37.79}  &   \underline{0.9936}  &   31.19  &  0.9646 &  20.54 &   0.6337 &19.30 &0.6043\\
&LRQA-L
            &   25.54  &  0.9276  &   \underline{32.99}  &  0.9684 &  17.96 &   0.4968 & 26.09&0.6900 \\
&LRQMC
            &   {\bf 39.25}  &  {\bf 0.9940}  &   31.54  &  0.9583 &  24.21 &   0.5030 &24.82 &0.6044\\
&NSS-based QMC
            & 34.72  &   0.9927& {\bf 33.22}  &  {\bf 0.9898}&  {\bf 32.50}   & {\bf 0.9875} &{\bf 34.47}& {\bf 0.9794}\\
\hline
 \multirow{5}{*}{
  $ \begin{array}{c}{\rm Flowers}\tnote{2}\\
   (500\times 362)\\
   \end{array}$
   }
&QMC
            &   25.42&    0.9283  &    24.00&    0.8895 & 23.45&    \underline{0.8571} & 25.52&0.8668\\
&TC
            &   25.82&    0.9329  &    24.19&    0.8876 & \underline{23.66}&    0.8539 & \underline{26.13}&\underline{0.8692}\\
&LRQA-L
            &   26.58  &  0.9316  &   \underline{27.55}  &  \underline{0.8780} &  17.92 &   0.4259 & 23.04&0.6290 \\
&LRQMC
            &   {\bf 36.91}  &  {\bf 0.9754}  &   27.43  &  0.8288 &  23.33 &   0.2973 &24.45 &0.4689\\
&NSS-based QMC
            &  \underline{28.83}&    \underline{0.9517}&  {\bf 27.55}&   {\bf 0.9369}&  {\bf  27.12}    &{\bf 0.9283}& {\bf 28.78}&{\bf 0.9123}\\
\hline
 \multirow{5}{*}{
  $ \begin{array}{c}{\rm Lena}\\
   (512\times 512)\\
   \end{array}$
   }
&QMC
            &  29.73  &  0.9812  & 28.17  &  0.9725 & 27.30 &  0.9656 &29.52 &0.9489\\
&TC
            &  31.23  &  0.9850  & 29.22  &  \underline{0.9760} & \underline{28.17}  &  \underline{0.9689}& \underline{31.22}&\underline{0.9606} \\
&TMac$\_{}$TT
            &    \underline{38.57} &  0.9824  &   \underline{31.02}  &  0.9155 &  21.10 &   0.6723 &19.85 &0.6469\\
&LRQA-L
            &   29.22  &  0.9285  &   30.94  &  0.8808 &  18.06&   0.5062 & 26.47&0.7290 \\
&LRQMC
            &   {\bf 40.76}  &  {\bf 0.9945}  &   32.44  &  0.9594 &  24.97 &   0.5282 &25.78 &0.6469\\
&NSS-based QMC
            & 35.63&  \underline{0.9935}&  {\bf 33.88}  &  {\bf 0.9904}&  {\bf  33.11} &   {\bf 0.9882}&{\bf 35.64} & {\bf 0.9806}\\
\hline
 \multirow{5}{*}{
  $ \begin{array}{c}{\rm House}\\
   (256\times 256)\\
   \end{array}$
   }
&QMC
            &  29.11&     0.9672 & 27.46&   0.9495 & 26.47&    0.9339& 29.06& 0.9235\\
&TC
            & 31.61&    0.9743 & 29.35&    \underline{0.9573} & \underline{28.03}&    \underline{0.9401} &\underline{31.69} &\underline{0.9268}\\
&TMac$\_{}$TT
            &   \underline{39.74} &  0.9820  &   \underline{32.52}  &  0.9130 &  21.70 &   0.4811&21.48 &0.4384 \\
&LRQA-L
            &   35.51  &  0.9381  &   30.39  &  0.8924 &  17.97 &   0.3641 & 23.40&0.7740\\
&LRQMC
            &   {\bf 41.42}  &  \underline{0.9840}  &   32.73  &  0.9053 &  25.35 &   0.2925 &23.13 &0.4384\\
&NSS-based QMC
            &  35.42&    {\bf 0.9887} & {\bf  33.86}&  {\bf  0.9841}& {\bf  32.87}&   {\bf  0.9798}& {\bf 35.35}&{\bf  0.9597}\\
\hline
 \multirow{5}{*}{
  $ \begin{array}{c}{\rm Barbara}\\
   (256\times 256)\\
   \end{array}$
   }
&QMC
            &  29.31&     0.9031 & 27.19&   0.8359 & 26.16&    0.7723& 29.11& 0.8963\\
&TC
            & 32.49&    0.9554 & 28.03&    0.8477 & \underline{26.52}&    \underline{0.7698} &\underline{32.07} &\underline{0.9422}\\
&TMac$\_{}$TT
            &   \underline{36.91} &  \underline{0.9792}  &   \underline{29.80}  &  0.8997 &  20.22 &   0.5115&19.16 &0.4894 \\
&LRQA-L
            &   25.54  &  0.9213  &   29.14  &  \underline{0.9148} &  19.34 &   0.4803 & 25.64&0.7310\\
&LRQMC
            &   {\bf 37.93}  &  {\bf 0.9797}  &   29.51  &  0.8771 &  24.13 &   0.3543 &24.78 &0.4894\\
&NSS-based QMC
            &  32.80&    0.9564 & {\bf  31.17}&  {\bf  0.9376}& {\bf  30.42}&   {\bf  0.9236}& {\bf 32.55}&{\bf  0.9533}\\
\hline
 \multirow{5}{*}{
  $ \begin{array}{c}{\rm Pallon}\\
   (256\times 256)\\
   \end{array}$
   }
&QMC
            &  31.94&     0.9486 & 30.46&   0.176 & \underline{29.89}&    \underline{0.8790}& 31.98& \underline{0.9441}\\
&TC
            & 31.83&    0.9455 & 30.28&    0.9119 & 29.65&    0.8635 &\underline{31.96} &0.9393\\
&TMac$\_{}$TT
            &   \underline{41.69} &  \underline{0.9872}  &   \underline{34.78}  &  \underline{0.9416} &  18.00 &   0.3267&19.16 &0.4894 \\
&LRQA-L
            &   38.55  &  0.9723  &   32.01  &  0.9259 &  17.74 &   0.4765 & 27.24&0.6780\\
&LRQMC
            &   {\bf 42.83}  &  {\bf 0.9893}  &   34.58  &  0.9357 &  24.96 &   0.2368 &24.98&0.3150\\
&NSS-based QMC
            &  37.48&    0.9785 & {\bf  36.03}&  {\bf  0.9691}& {\bf  35.62}&   {\bf  0.9617}& {\bf 37.48}&{\bf  0.9772}\\
\hline
 \multirow{5}{*}{
  $ \begin{array}{c}{\rm Pens}\\
   (256\times 256)\\
   \end{array}$
   }
&QMC
            &  26.95&     0.9233 & 24.78&   0.8384 & 23.73&    0.7539& 27.15& 0.9134\\
&TC
            & 30.18&    0.9565 & 26.65&    0.8551 & \underline{25.09}&    \underline{0.7621} &\underline{30.29} &\underline{0.9429}\\
&TMac$\_{}$TT
            &   \underline{37.18} &  \underline{0.9822}  &   26.76  &  0.8503 &  19.57 &   0.5150&18.59 &0.4987 \\
&LRQA-L
            &   28.79  &  0.9566  &   28.96  &  \underline{0.9223} &  17.74 &   0.4705 & 25.81&0.7140\\
&LRQMC
            &   {\bf 38.33}  &  {\bf 0.9830}  &   \underline{28.99}  &  0.8771&  23.51 &   0.3633 &24.39 &0.4987\\
&NSS-based QMC
            &  31.19&     0.9561 & {\bf  29.95}&  {\bf  0.9361}& {\bf  29.16}&   {\bf  0.9179}& {\bf 31.18}&{\bf  0.9527}\\
\hline
\end{tabular}}
\begin{tablenotes}
\footnotesize 
\item[1] Here, Algorithm 2 and 3 from \cite{chen2019} are utilized for the last and the first three cases, respectively.
      \item[2] For the color image `{\rm Flowers}', the method `TMac$\_{}$TT' is absent due to the constraints of KA augmentation.
\end{tablenotes}
} \vskip-3mm
\end{center}
\end{threeparttable}
\end{table}

\begin{figure}
\centering
\includegraphics[width=0.99\linewidth,height=0.90\linewidth]{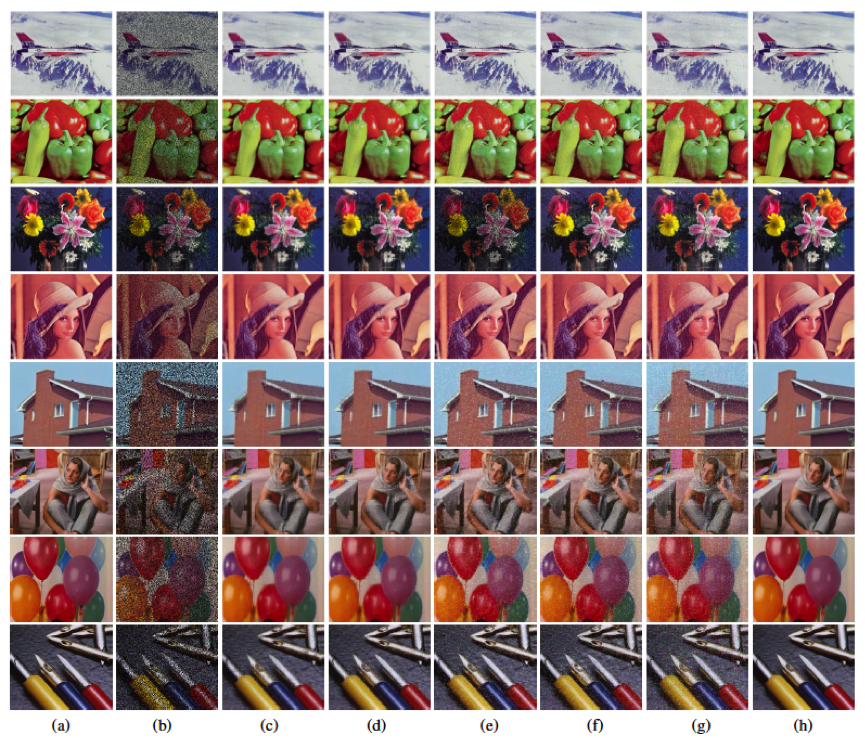}
\caption{Robust color image inpainting: (a) the original color images, (b) the observations ($1-\rho=50\%,\gamma=10\%$),  (c-h) the reconstructions by QMC, TC,  TMac$\_{}$TT, LRQA-L, LRQMC, and  TNSS-based QMC. (The color images can be  viewed better  in zoomed PDF.) }
\label{fig:mainim}
\end{figure}

\begin{figure}
\centering
\includegraphics[width=0.85\linewidth,height=0.40\linewidth]{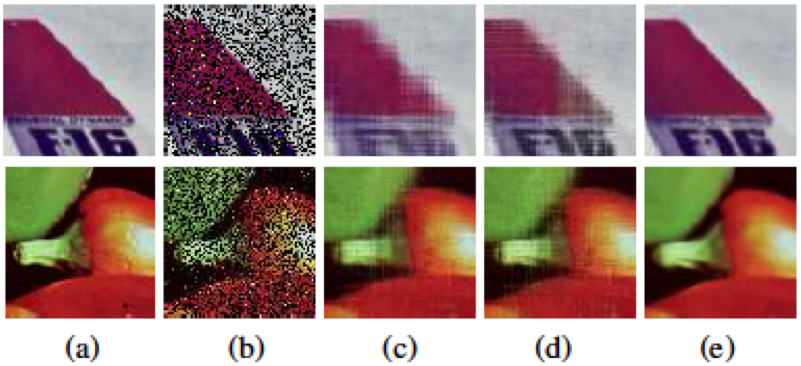}
\caption{The enlarged parts (of size $80\times 80$) of: (a) the  original color images, (b) the observations, (c-e) the reconstructions by QMC, TC, and NSS-based QMC. These are corresponding to that  in the first two rows of  Fig.\ref{fig:mainim}. }
\label{fig:qmcVSpqmc_elarge}
\end{figure}

\begin{figure}[h!]
\centering
\includegraphics[width=0.99\linewidth,height=0.65\linewidth]{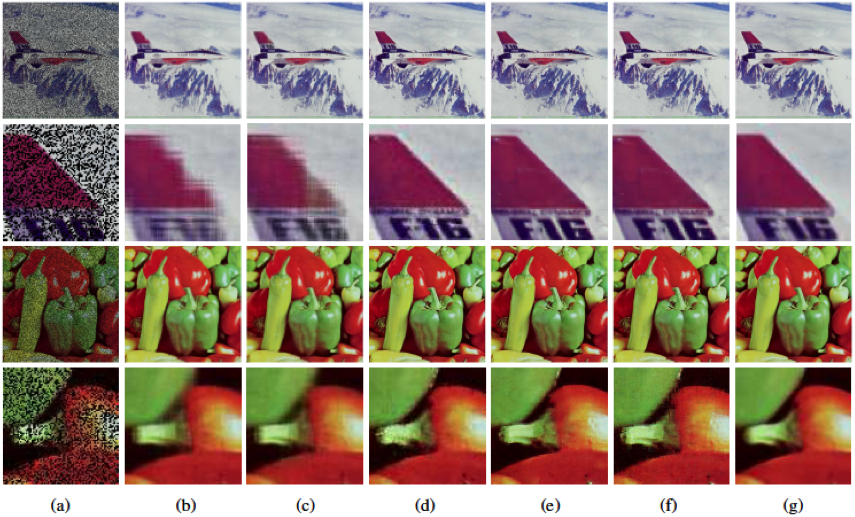}
\caption{Color image inpainting: (a) the observations ($1-\rho=50\%,\gamma=0\%$),  (b-g) the reconstructions  by QMC, TC, TMac$\_{}$TT, LRQA-L, LRQMC, and NSS-based QMC. 
The enlarged parts (of size $80\times 80$) are listed in the 2nd and 4th rows.  }
\label{fig:qmcVSpqmcVSTMacTT}
\end{figure}

Numerical comparisons were implemented on eight  standard color images  and in four  cases with different levels of missingness and noise.
The detailed settings are as follows.
A standard uniform noise is independently and randomly added into $\ell$ pixel locations of red, green and blue channels of color images.
The sparsity of noise components is denoted by $\gamma=\ell/(n_1 n_2)$, where the size of image is $n_1$-by-$n_2$.
Let $\Omega$ be the set of observed entries which are generated randomly and let $\rho=|\Omega|/(n_1 n_2)$ denote the percentage of observed entries.
Four different situations are considered: $(1-\rho, \gamma)=(10\%, 0\%), ~(50\%, 0\%),~(50\%, 10\%),~(0\%, 10\%)$.
The stopping criteria of the alternating direction method of multipliers are that
the norm of the successive iterates is less than $1e$-$4$ and the maximum number of iteration is $500$.
Note that many state-of-the-art real-matrix completion-based methods are not listed in the compared algorithms since QMC was shown in \cite{jia2018quaternion} to have better performance in color image inpainting.

\end{example}

 All numerical results are listed in Table \ref{tab:qmcVSpqmc}. The best PSNR and SSIM values are  shown in bold,  and the second-best ones are underlined. It is worth mentioning that TMac$\_{}$TT  and LRQMC cannot handle the latter two cases with noise ($\gamma>0$) according to \cite{Johnttmac}  and \cite{miao2020}, respectively. LRQA-L  \cite{chen2019} cannot handle the third case with missing pixels and noise  $(1-\rho>0,~\gamma>0)$.
In other words, these three methods are not designed for robust color image inpainting~(i.e., inpainting color image from incomplete and corrupted pixels)
\footnote{   According to the referee reports, our method is compared with TMac$\_{}$TT \cite{Johnttmac}, LRQA-L \cite{chen2019} and LRQMC \cite{miao2020}. All numerical results of these three methods are listed in Table \ref{tab:qmcVSpqmc} for the completeness of comparison. Because the methods are  for completion problem or denosing problem alone, thus the results are not good if the problem is not their targeting problem. }.

Also, TMac$\_{}$TT cannot be directly implemented on the color image `Flower' of size $500\times 362$, since TMac$\_{}$TT is limited to the size of picture because of the ket augmentation (KA) technique, which was originally introduced in \cite{j2005} using an appropriate block addressing.

From Table \ref{tab:qmcVSpqmc}, we can see that  NSS-based QMC performs better than the other methods in most of cases. LRQMC achieves the highest PSNR values in the first case with missing $10\%$ pixels  and without noise (TMac$\_{}$TT achieves  the second highest PSNR values for seven color images). However, when the missing rate rises to $50\%$, NSS-based QMC achieves the highest PSNR values.
For instance,  when only $10\%$ pixels are missing,  the PSNR value of the reconstructed `Pepper' image by  LRQMC is   $39.25$,  which is about  $5$ points higher than  NSS-based QMC ($34.72$);  but when  $50\%$ pixels are missing, the PSNR value of the reconstructed `Pepper' image by NSS-based QMC  is $33.22$, which is about $2$ points higher than  LRQMC ($31.54$). 
The superiority of NSS-based QMC is more obvious when turning to the SSIM values in Table \ref{tab:qmcVSpqmc}.
 Not only that, NSS-based QMC can handle the case with both missing pixels and noise, and performs better than QMC and TC. Overall, the newly proposed algorithm outperforms the other compared  algorithms.

For the visual comparison,  the  original, observed and recovered color images in the case  of  robust color image inpainting with $(1-\rho,\gamma)=(50\% ,10\%)$ are shown in Fig.\ref{fig:mainim}. (Note that since TMac$\_{}$TT cannot directly recover the image `Flowers',  the observation is located at   row $3$ and column $5$.)  It is easily to  find a lot of noise left in the reconstructions of  TMac$\_{}$TT, LRQA-L, and LRQMC  from  Fig.\ref{fig:mainim}(e-g).  This numerically indicates  that these three methods cannot  handle robust color image inpainting, which is not their goal. 
 From Fig.\ref{fig:mainim}(c,d,h),  we can find that QMC, TC and NSS-based QMC  successfully remove noise during  inpainting, since they are designed for robust color image inpainting. 

More importantly, it is observed from Table \ref{tab:qmcVSpqmc} that the PSNR and SSIM values of reconstructions by NSS-based QMC are much higher than those by QMC and TC. 
And from the visual comparison in Fig.\ref{fig:mainim}(c,d,h), the recovered color images by NSS-based QMC are of higher quality.  The enlarged  parts of these reconstructions are listed in Fig.\ref{fig:qmcVSpqmc_elarge}.
The color and edge information is well preserved by  NSS-based QMC.  However, the sharp geometry shapes, likes the edges and angles, are not clearly recovered by QMC and TC.  Moreover, the newly proposed NSS-based QMC overcomes the disadvantage of color mixture of TC.   For instance, the pale yellow appears at the edges of red zones  in Fig.  \ref{fig:qmcVSpqmc_elarge}(row $1$ and column $4$), which is recovered by TC ;  but the different color zones are very clean and  the color edges are very sharp  in  Fig. \ref{fig:qmcVSpqmc_elarge}(row $1$ and column $5$), which is recovered by   NSS-based QMC.  More numerical and visual results can be seen in the supplementary material.

For the completeness of comparison, we also show the reconstructed color images  of  the third and fourth cases $(50\%,0\%)$ and  $(0\%,10\%)$ in  the 1st and 3rd rows of Fig.\ref{fig:qmcVSpqmcVSTMacTT} and Fig.\ref{fig:qmcVSpqmcVSLRQA-L}, and their  enlarged  parts are listed in  the 2nd and 4th rows.
These visual comparisons indicate that  NSS-based QMC can preserve the color information and  color edges well while other methods fail to do this. For example,  LRQMC fails to reach a clean edge of `Pepper'  in  Fig.\ref{fig:qmcVSpqmcVSTMacTT}(row $4$ and column $6$); LRQA-L  fails to recover the letter `F'  in  Fig.\ref{fig:qmcVSpqmcVSLRQA-L} (row $2$ and column $4$).

\begin{figure}
\centering
\includegraphics[width=0.85\linewidth,height=0.65\linewidth]{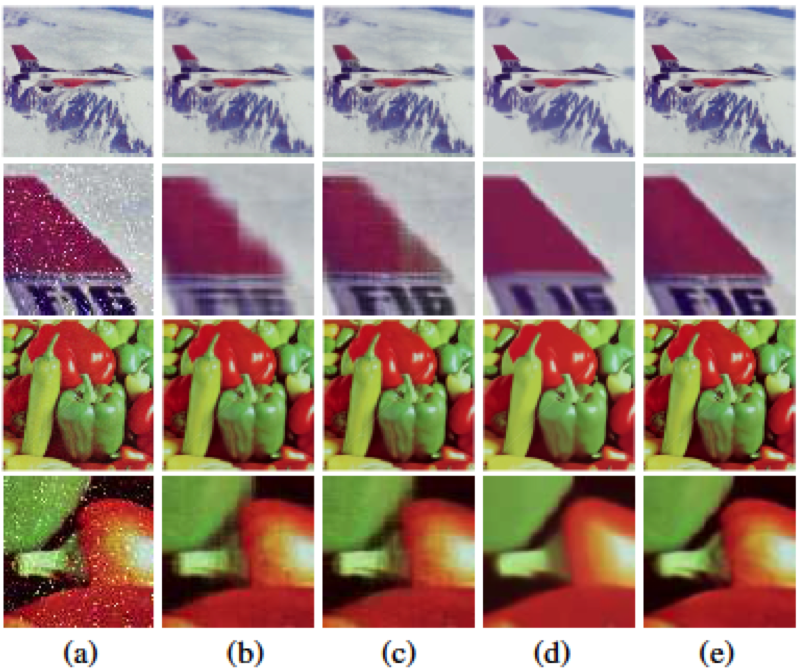}
\caption{Color image denoising: (a) the observations ($1-\rho=0\%,\gamma=10\%$),  (b-e) the reconstructions  by QMC, TC, LRQA-L, and NSS-based QMC. 
The enlarged parts (of size $80\times 80$)  are listed in the 2nd and 4th rows.}
\label{fig:qmcVSpqmcVSLRQA-L}
\end{figure}

 Notice that  TC and TMac$\_{}$TT represent color images by  third-order tensors, while QMC, LRQA-L, LRQMC, and NSS-based QMC by quaternion matrices.
   LRQA-L and NSS-based QMC apply nonlocal self-similarity, but the former does not performs as expected.
 Our methods are not compared with  BM3D \cite{dabov2007image},  LSSC \cite{mairal2009}, NCSR \cite{dong2013}, and WNNM \cite{gu2017weighted} since they are efficient image denoising methods but not proper to solve color image inpainting problem.
For instance,  if BM3D without modification is  applied to solve the color inpainting problem in Example \ref{e:ex1},  then the obtained reconstructions will contain unknown pixels and a few of noise, and their  PSNR and SSIM values  are lower than those in Table  \ref{tab:qmcVSpqmc}.

\begin{figure}[!h]
\centering
\includegraphics[width=0.99\linewidth,height=0.90\linewidth]{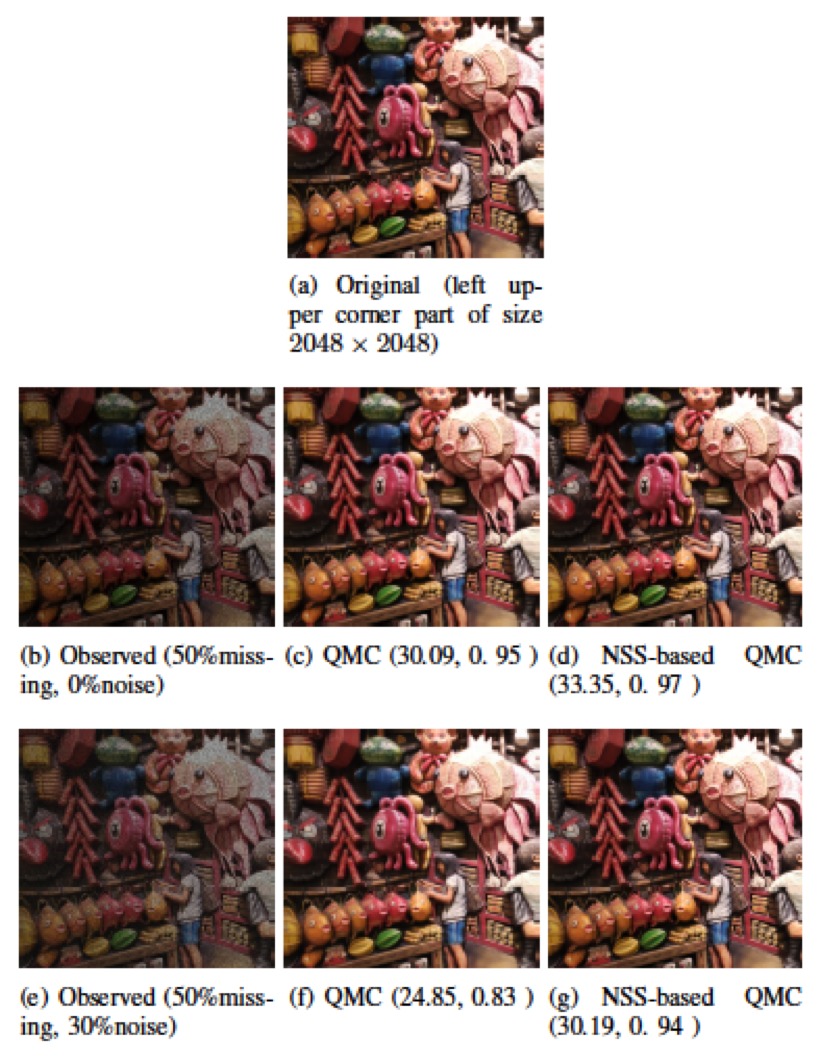}
\caption{Visual comparison of completion results by QMC and NSS-based QMC.}
\label{fig:xyp}
\end{figure}

\begin{figure}[!h]
\centering
\includegraphics[width=0.99\linewidth,height=0.90\linewidth]{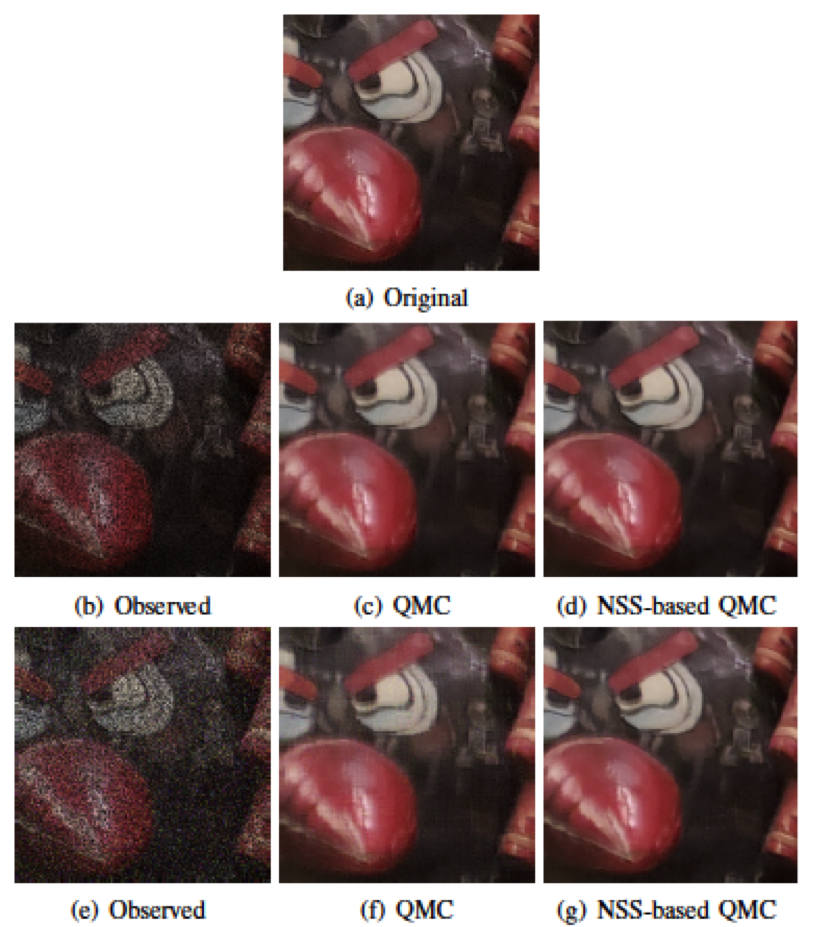}
\vspace{-0.4cm}
\caption{The enlarged parts (of size $170\times 170$) of reconstructions in Fig.\ref{fig:xyp}.}
\label{fig:xyp_elarge}
\end{figure}

\begin{example}[Large-Scale Color Image Inpainting]
\label{e:ex2}
In this experiment, we apply the inpainting problem of a large-scale color image.
The testing image  is a photo of wall painting at the Xi Ying Pan subway station in Hong Kong, taken by the first author in 2019.  The size of original photo is  $3024\times 4032$.
To speed up the process, we partition the original color into 256 blocks of size $189\times 252$ at first and  then apply the proposed NSS-based QMC on each block in parallel. The setting is that the size searching window $D=20$ and  the number of patches in each patching group $d=60$.
The stopping criteria of the alternating direction method of multipliers are that the norm of the successive iterates is less than $1e$-$4$ and the maximum number of iteration is $100$.
\end{example}

In Fig.\ref{fig:xyp}, we list the reconstructed images  from two different cases:  $(\rho,\gamma)$ $=$ $(50\%,0\%)$ and $(50\%,30\%)$.  Since the images are too large to show, only their  left corner of size $2048\times2048$ are plotted.  The PSNR and SSIM values are also listed under the corresponding images.  Here we only compare NSS-based QMC  with QMC since both of them are able to avoid the color mixture during the recovering process.  From the numerical results in Fig.\ref{fig:xyp}, it is observed that NSS-based QMC performs better than QMC by producing images with higher PSNR and SSIM values.  From the enlarges parts of size $170\times 170$ in Fig. \ref{fig:xyp_elarge}, one can see that the reconstructions by NSS-based QMC   are smoother than those by QMC and in the case $(\rho,\gamma)=(50\%,30\%)$, there are still some noise left in the recovered image by  QMC and the color edges are not sharp.

\begin{example}[Color Video Inpainting]\label{e:ex3}
In this experiment, we apply the TNSS-based QMC proposed in Section \ref{sec:large} to  the inpainting problem of  color videos.  The testing video  is downloaded from the  `videoSegmentationData' database\footnote{The website of `videoSegmentationData': http://www.kecl.ntt.co.jp/people/kimura.akisato/saliency3.html} in \cite{fmkty09} and contains $72$ frames which are of size $288\times 352$.
We randomly take  20 frames of the color video `M07\_058' and store them in a third-order quaternion tensor.
The observed color video is generated by  randomly removing  $80\%$ pixels from the original color video.
The size searching window $D=20$ and  the number of patches in each patching group $d=60$.  The stopping criteria of the alternating direction method of multipliers are that the norm of the successive iterates is less than $1e$-$4$ and the maximum number of iteration is $100$.
For comparison, we also apply  QMC  to each frame and reconstruct all frames of  color video one by one.
\end{example}

For indication, the first $4$ frames of  original, observed and reconstructed color videos are listed in Fig.\ref{fig:video} and  their enlarged parts of size $60\times 60$ are also shown in Fig.\ref{fig:video}.
 The  PSNR and SSIM values of color videos  are given in Table \ref{tab:video}.
   From these numerical results, we observe that  TNSS-based QMC  successfully reconstructs color video from  20\% known  pixels. It performs better than QMC by producing each frame with higher PSNR and SSIM values.
     Moreover, QMC fails to recover the edges of airplane, but TNSS-based QMC successfully  reconstructs the color information and geometric shape of airplane.  From the enlarged parts  in Fig.\ref{fig:video},  one can observe that  the aircraft nose reconstructed  by TNSS-based QMC  is  smooth and has sharp edge, but  the aircraft nose recovered by  QMC is rough and has blur edge.  The computational cost of CPU time by TNSS-based QMC is about $5.23\%$ of that costed by QMC.

\begin{table}
\caption{
PSNR and SSIM values of the reconstructed videos by QMC and TNSS-based QMC.}
\label{tab:video}
\begin{center}
\renewcommand{\arraystretch}{1.0} \vskip-3mm {\fontsize{8pt}{\baselineskip}%
\selectfont
\scalebox{0.99}{
\begin{tabular}{|c|cc|cc||c|cc|cc|}
\hline
Number of        & \multicolumn{2}{c|}{ QMC } &\multicolumn{2}{c||}{ TNSS-based QMC}& Number of        & \multicolumn{2}{c|}{ QMC } &\multicolumn{2}{c|}{TNSS-based QMC}  \\
 frames      &  PSNR&SSIM &PSNR&SSIM& frames      &  PSNR&SSIM &PSNR&SSIM \\ \hline\hline
1
&  30.33 & 0.9782   &  {\bf  35.83}  & {\bf  0.9936}&
11
&  29.39 & 0.9760   &   {\bf 33.89}& {\bf  0.9926}  \\ \hline
2
&  30.32 & 0.9782   &   {\bf 35.87}  & {\bf  0.9936} &
12
&  29.38 & 0.9759   &   {\bf 33.79}  &  {\bf 0.9924}  \\ \hline
3
&  30.42 & 0.9786   &   {\bf 36.17}  & {\bf 0.9939} &
13
&  29.23 & 0.9752   &   {\bf 33.71} &  {\bf 0.9922}  \\ \hline
4
&30.70 & 0.9796  &  {\bf 35.46}  & {\bf 0.9935} &
14
&  29.86 & 0.9780   &   {\bf 34.21}  &  {\bf 0.9931} \\ \hline
5
&  30.77 & 0.9797   &   {\bf 35.52} &  {\bf 0.9935} &
15
&  30.09 & 0.9789   &   {\bf 33.94}  &  {\bf 0.9925}  \\ \hline
6
&  30.53 & 0.9794   &   {\bf 35.36} &  {\bf 0.9935} &
16
&29.84 & 0.9783   &   {\bf 34.06}  &  {\bf 0.9928} \\ \hline
7
&  30.52 &0.9794   &   {\bf 35.57}  &  {\bf 0.9937} &
17
&  29.82 & 0.9782   &   {\bf 34.07} &  {\bf  0.9928} \\ \hline
8
&  30.47 & 0.9793   &  {\bf  35.54} &  {\bf 0.9938} &
18
&  29.77 & 0.9776   &{\bf 34.91}  & {\bf  0.9938} \\ \hline
9
&  30.11 & 0.9783   &  {\bf  34.15} &  {\bf 0.9928}&
19
&  29.67 & 0.9780   &   {\bf 35.12}  & {\bf  0.9943}  \\ \hline
10
&  29.75 & 0.9775   &  {\bf 34.06}  & {\bf 0.9926} &
20
&  29.56 & 0.9781   &  {\bf 33.97}  & {\bf  0.9928}  \\ \hline
\end{tabular}}
} \vskip6mm
\end{center}
\end{table}

\begin{figure}[!h]
\centering
\includegraphics[width=0.99\linewidth,height=0.60\linewidth]{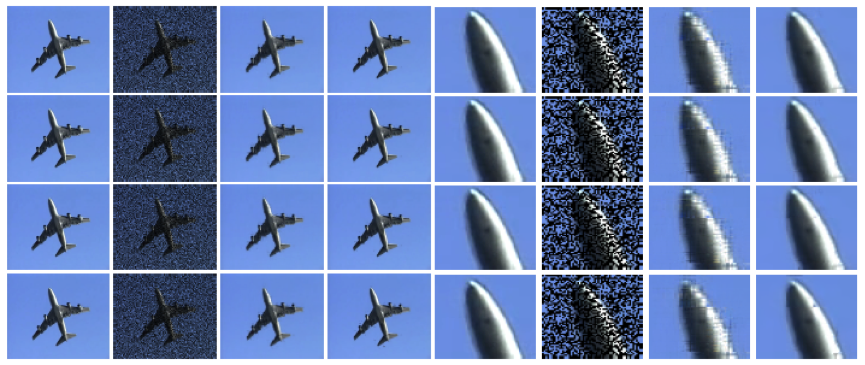}

\caption{First 4 frames and enlarged parts of the original color video (1st column), the observed color video (2nd column),  the reconstructed color video by QMC (3rd column) and the reconstructed color video by TNSS-based QMC (4th column).}
\label{fig:video}
\end{figure}

\begin{table}
\caption{
PSNR and SSIM values of the reconstructed videos by QMC, TNN, TTNN, and TNSS-based QMC.}
\label{tab:videohorsenois01}
\begin{center}
\renewcommand{\arraystretch}{1.0} \vskip-3mm {\fontsize{8pt}{\baselineskip}%
\selectfont
\scalebox{0.99}{
\begin{tabular}{|c|c|cc|cc|cc|cc|cc|}
\hline
Number of   &Type of      & \multicolumn{2}{c|}{ QMC } &\multicolumn{2}{c|}{ TNN}&\multicolumn{2}{c|}{ TTNN}         &\multicolumn{2}{c|}{TNSS-based QMC} \\
 frames   &sparse noise   &  PSNR&SSIM &PSNR&SSIM&  PSNR&SSIM     &PSNR&SSIM \\ \hline\hline
1
& `TUBE' & \underline{34.63}   &  0.9412   &   31.76  &  \underline{0.9446} &   29.88 & 0.8416     &   \bf{40.57} &  \bf{0.9784}\\
& `NON-TUBE' & \underline{34.57}  &  0.9410   &   31.72  &  \underline{0.9458} &   31.47 & 0.8484     &   \bf{40.46}  &  \bf{0.9789}\\ \hline
2
& `TUBE' & \underline{34.78} & 0.9445   &   32.16  &  \underline{0.9497} &   30.35 & 0.8568    &   \bf{41.70} &  \bf{0.9832} \\
& `NON-TUBE' & \underline{34.77}  &  0.9440   &   32.07  &  \underline{0.9497} &   31.85 & 0.8576     &   \bf{41.65}  &  \bf{0.9835}\\ \hline
3
& `TUBE' & \underline{35.00} & 0.9464  &   32.59  & \underline{0.9528} &    30.79 & 0.8696   &   \bf{42.24} &  \bf{0.9850}\\
& `NON-TUBE' & \underline{34.98}  &  0.9457   &   32.44  & \underline{0.9514} &    32.22 & 0.8655     &   \bf{42.42}  &  \bf{0.9856}\\ \hline
4
& `TUBE' & \underline{34.94} & 0.9475  &  32.64  & \underline{0.9524} &    30.90 & 0.8766     &   \bf{42.35} &  \bf{0.9859}\\
& `NON-TUBE' & \underline{35.10}  &  0.9477   &   32.63  & \underline{0.9529} &    32.20 & 0.8689     &   \bf{42.52}  &  \bf{0.9864}\\ \hline
5
& `TUBE' & \underline{35.09} & 0.9474   &   32.61 &  \underline{0.9512} &   30.76 & 0.8773     &   \bf{42.28} &  \bf{0.9858}\\
& `NON-TUBE' & \underline{34.92}  &  0.9463   &   32.59 &  \underline{0.9506} &   32.07 & 0.8659     &   \bf{42.36} &  \bf{0.9863}\\ \hline
6
& TUBE& \underline{34.88} & 0.9452  &   32.64 &  \underline{0.9511}  &  30.85 & 0.8839    &   \bf{41.91} &  \bf{0.9841}\\
&NON-TUBE & \underline{34.86}  &  0.9455   &   32.58 &  \underline{0.9520}  &  32.04 & 0.8667     &   \bf{42.30}  &  \bf{0.9855}\\ \hline
7
& `TUBE' & \underline{34.82} & 0.9443   &   32.55 &  \underline{0.9484}  &  30.93 & 0.8883     &   \bf{41.65} &  \bf{0.9829}\\
& `NON-TUBE' & \underline{34.84}  &  0.9447   &   32.58 &  \underline{0.9504} &  31.95 & 0.8637     &   \bf{41.94}  &  \bf{0.9840}\\ \hline
8
& `TUBE' & \underline{35.02} & 0.9456  &   32.62 &  \underline{0.9505}  &  31.04 & 0.8931     &   \bf{42.16} &  \bf{0.9849}\\
& `NON-TUBE' & \underline{35.02}  &  0.9457   &  32.64&  \underline{0.9512}  &  32.00 & 0.8676     &   \bf{42.35}  &  \bf{0.9861}\\ \hline
9
& `TUBE' & \underline{34.93} & 0.9454  &   32.62 &  \underline{0.9505}  &  30.92 & 0.8910     &   \bf{42.16} &  \bf{0.9854}\\
&NON-TUBE & \underline{34.91}  &  0.9451   &   32.64 &  \underline{0.9517}  &  32.03 & 0.8633     &   \bf{42.54}  &  \bf{0.9866}\\ \hline
10
& `TUBE' & \underline{34.94} & 0.9456   &   32.47 &  \underline{0.9503}  &  30.65 & 0.8882     &   \bf{42.04} &  \bf{0.9850}\\
& `NON-TUBE' & \underline{34.99}  &  0.9457   &   32.64 &  \underline{0.9515}  &  31.84 & 0.8596     &   \bf{42.38}  &  \bf{0.9861}\\ \hline
\end{tabular}}
} \vskip6mm
\end{center}
\end{table}

\begin{table}
\caption{
PSNR and SSIM values of the reconstructed videos by QMC, TMac$\_{}$TT, LRQTC, and TNSS-based QMC.}
\label{tab:videohorsennan08}
\begin{center}
\renewcommand{\arraystretch}{1.0} \vskip-3mm {\fontsize{8pt}{\baselineskip}%
\selectfont
\scalebox{0.99}{
\begin{tabular}{|c|c|cc|cc|cc|cc|cc|}
\hline
Number of    &    Type of & \multicolumn{2}{c|}{ QMC } &\multicolumn{2}{c|}{ TMac$\_{}$TT} &\multicolumn{2}{c|}{ LRQTC}& \multicolumn{2}{c|}{TNSS-based QMC} \\
 frames      & pixel missing&  PSNR&SSIM &PSNR&SSIM&  PSNR&SSIM &  PSNR&SSIM\\ \hline\hline
1
& `TUBE' & 30.32 & 0.8445   &   30.71             &  0.8274    &\underline{32.47} &\underline{0.8900}        &\bf{34.62} &\bf{0.9324}\\
& `NON-TUBE' &  30.25     &   0.8448  &   31.53      &0.8514      &\underline{34.15}&\underline{0.9263}         &\bf{35.33} &\bf{0.9424}    \\ \hline
2
& `TUBE' & 30.37 & 0.8473     & 30.90  &  0.8315    &\underline{32.60}&\underline{0.8933}           & \bf{34.68} &  \bf{0.9363} \\
& `NON-TUBE' & 29.94 &  0.8425   &  31.89        &  0.8569  &\underline{34.31}&\underline{0.9245}      & \bf{34.95}  & \bf{0.9436}  \\ \hline
3
& `TUBE' & 30.46 & 0.8492  &   31.13  & 0.8357 & \underline{32.64}&\underline{0.8952}&      \bf{35.07 }&  \bf{0.9409}\\
& `NON-TUBE' & 30.49  & 0.8487    & 32.23     &     0.8650    &\underline{34.67}&\underline{0.9284}   &  \bf{35.71}  & \bf{0.9473}  \\ \hline
4
& `TUBE' & 30.66 & 0.8535  &  31.14  & 0.8372 & \underline{32.82}&\underline{0.8978}&      \bf{34.69 }&  \bf{0.9390}\\
& `NON-TUBE' &  30.68  &  0.8520    &  32.27      &  0.8679  &\underline{35.04}&\underline{0.9341}   &  \bf{36.04}   & \bf{0.9494}  \\ \hline
5
& `TUBE' & 30.73 & 0.8553   &   31.01 &  0.8353 & \underline{32.69}&\underline{0.8964}&     \bf{34.79} &  \bf{0.9407}\\
& `NON-TUBE' & 30.38   &  0.8490    &  32.05    &   0.8655   &\underline{34.82}&\underline{0.9346}   &  \bf{36.31}    &  \bf{0.9525}  \\ \hline
6
& `TUBE' & 30.70 & 0.8553  &   30.97 &  0.8337 & \underline{32.60}&\underline{0.8937}&   \bf{34.53} &  \bf{0.9390}\\
& `NON-TUBE' & 30.47  &  0.8513    &  32.17     &   0.8677 &\underline{34.88}&\underline{0.9337}    &   \bf{35.65}   &  \bf{0.9481}  \\ \hline
7
& `TUBE' & 30.64 & 0.8521   &   30.90 &  0.8314  & \underline{32.57}&\underline{0.8903}&    \bf{34.50} &  \bf{0.9356}\\
& `NON-TUBE' & 30.42   & 0.8486     &  32.03    &   0.8648 &\underline{34.73}&\underline{0.9321}     &  \bf{35.68}   &  \bf{0.9474}  \\ \hline
8
& `TUBE' & 30.66 & 0.8519  &   30.99 &  0.8343  & \underline{32.61}&\underline{0.8896}&   \bf{34.52} &  \bf{0.9341}\\
& `NON-TUBE' & 30.79  & 0.8540    &   32.05    &    0.8667 &\underline{34.70}&\underline{0.9311}    &   \bf{36.10}  &   \bf{0.9498} \\ \hline
9
& `TUBE' & 30.42 & 0.8503  &   30.90 &  0.8313  & \underline{32.65}&\underline{0.8905}&  \bf{34.10} &  \bf{0.9339}\\
& `NON-TUBE' & 30.70  & 0.8506   &  31.98    &   0.8625 &\underline{34.93}&\underline{0.9319}     &   \bf{36.13}  &   \bf{0.9521} \\ \hline
10
& `TUBE' & 30.46 & 0.8514   &   30.79 &  0.8307 &\underline{32.58}&\underline{0.8901}  &   \bf{34.15} &  \bf{0.9344}\\
& `NON-TUBE' & 30.54  &  0.8518    &  31.78    &    0.8596  &\underline{34.51}&\underline{0.9288}   &  \bf{35.85}   &  \bf{0.9500}  \\ \hline
\end{tabular}}
} \vskip6mm
\end{center}
\end{table}

\begin{figure}
\centering
\includegraphics[width=0.99\linewidth,height=0.95\linewidth]{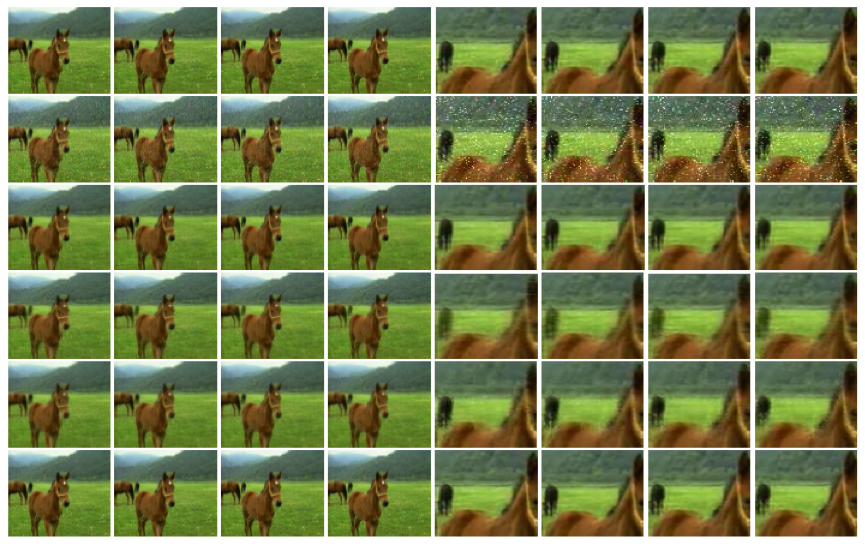}
\caption{First 4 frames and enlarged parts of the original color video (1st row), the observed color video (2nd row),  and the reconstructions by QMC (3rd row), TNN (4th row),  TTNN (5th row) and  TNSS-based QMC (last row). }
\label{fig:videohorsenois01}
\end{figure}

\begin{figure}
\centering
\includegraphics[width=0.99\linewidth,height=0.95\linewidth]{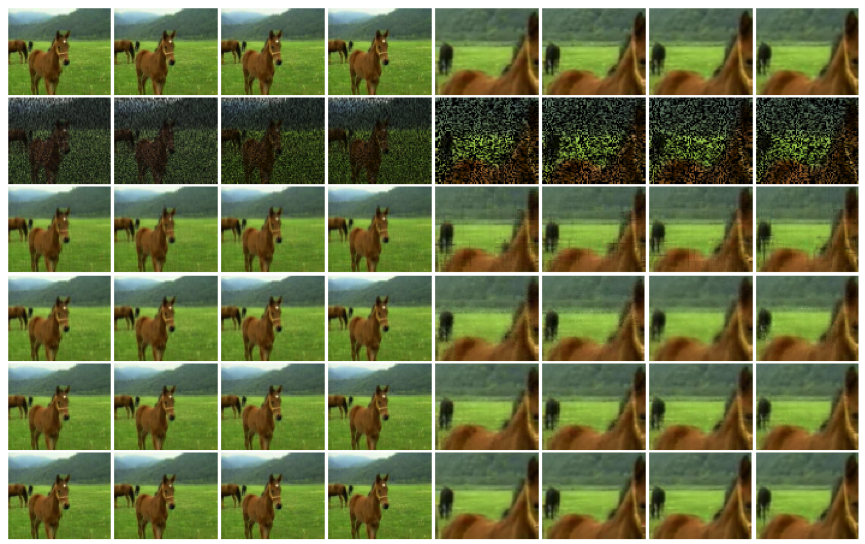}
\caption{First 4 frames and enlarged parts of the original color video (1st row), the observed color video (2nd row),  and the reconstructions by QMC (3rd row),  TMac$\_{}$TT (4th row),  LRQTC (5th row) and TNSS-based QMC (last row).}
\label{fig:videohorsenan08random}
\end{figure}

\begin{example}[Comparison with tensor completion methods
] \label{e:ex4}
In this experiment, we compare  TNSS-based QMC
 with  QMC,  TMac$\_{}$TT and three other tensor completion methods:
\begin{itemize}
\item  TNN--A method based on the tensor tubal rank \cite{CY2016}.
\item  TTNN--A model for tensor robust principle component analysis based on tensor train rank \cite{yang2020}.
\item  LRQTC--Low-rank quaternion tensor completion method \cite{miao2020}.
\end{itemize}
We implement the comparison on a widely used  color video `DO01$\_{}$013'  from the  `videoSegmentationData' database \cite{fmkty09}, which contains $89$ frames  of size $288\times 352$.  According to the setting mentioned in \cite{yang2020}, we need to resize the testing color video  into $243\times 256\times 3 \times 27$  
to implement TTNN and TMac$\_{}$TT. The frame mode is merged with the image row mode to form a third-order tensor, called by a video sequence tensor, of size $6561\times 256\times 3$. 
  Then the tensor is reshaped into a ninth-order tensor of size $6 \times 6\times 6\times 6\times 6\times 6\times 6\times 6\times 3$ using the KA technique \cite{j2005}. The ninth-order video sequence tensor  is directly used for the tensor completion algorithms (TTNN and TMac$\_{}$TT).

For the validity of comparison, we set two cases:
\begin{itemize}
\item  {\bf Case 1}: The observed color video is generated by randomly adding sparse noise to $10\%$ pixels of the original color video.
\item  {\bf Case 2}:  The observed color video is generated by  randomly removing  $80\%$ pixels from the original color video.
\end{itemize}
TNN and TTNN are tested in Case 1, since they can't  handle the problem with missing pixels.
TMac$\_{}$TT and LRQTC are tested in Case 2, because they can't handle the problem with sparse noise.
QMC and TNSS-based QMC are tested in two cases. The size of searching window $D=20$ and  the number of patches in each patching group $d=60$.
The stopping criteria of the alternating direction method of multipliers are that the norm of the successive iterates is less than $1e$-$4$ and the maximum number of iteration is $100$.
Moreover, two types of sparse noise and pixel missing are designed. One type is called  `TUBE' missing or noise, where all frames are missing or noisy at the same positions.  Another type is called `NON-TUBE' missing or noise, where all frames are missing or noisy at different locations but at the same percentage.

\end{example}

For Case 1,  the  PSNR and SSIM values  are given in Table \ref{tab:videohorsenois01}.
 The numerical results indicate that  TNSS-based QMC  is superior to other compared methods.
For instance,  within `TUBE' noise,  TNSS-based QMC has the average PSNR and SSIM values: $(41.91,~0.9841)$, which are higher than QMC $(34.90,~0.9454)$, TNN $(32.47,~0.9501)$, and TTNN $(30.71,~0.8766)$.
The first $4$ frames of original, observed  (with `TUBE' noise), reconstructed  color videos and their enlarged parts are shown in Fig.\ref{fig:videohorsenois01}. 
It can be observed that all compared methods remove sparse noise well,  and the  horse silhouette and mane recovered by  TNSS-based QMC are more obvious than the other methods.

 For  Case 2,  the  PSNR and SSIM values   are given Table \ref{tab:videohorsennan08}.
  The numerical results also indicate that  TNSS-based QMC  is superior to other compared methods (LRQTC is the second best one).
For instance, within `NON-TUBE' missing (see Table \ref{tab:videohorsennan08}), TNSS-based QMC has the average PSNR and SSIM values: $(35.77,~0.9483)$, which are higher than QMC $(30.47,~0.8493)$,  TMac$\_{}$TT $(32.00,~0.8596)$, and LRQTC $(34.67,~0.9300)$.
The first $4$ frames of original, observed (with `NON-TUBE' missing), reconstructed color videos and their enlarged parts are shown in Fig.\ref{fig:videohorsenan08random}. 
We can also observe that all compared methods successfully recover the missed pixels,  and visually,    TNSS-based QMC reconstruct the  horse silhouette and mane at a much higher level than the other methods.  Interestingly, TMac$\_{}$TT, LRQTC and NSS-based QMC perform better in the situation of `NON-TUBE'  than in the situation of `TUBE'.   This numerically confirms our previous judgement that the inpainting problem with incomplete and corrupted  tubes is one of the challenging problems.   Fortunately, the proposed TNSS-based QMC is good at handling such color video inpainting problem with `TUBE' missing.  For instance, TNSS-based QMC improves the average SSIM value from the second highest value  0.9027 (LRQTC) to 0.9456   in the `TUBE' type, while it increases    the average SSIM value from the second highest value $0.9305$ (LRQTC) to $0.9483$ in the `NON-TUBE' type.

Combined with Example
\ref{e:ex3}, the above numerical results show that TNSS-based QMC is able to reconstruct color video from incomplete and/or corruptted pixels
 and its performance is superior to the state-of-the-art methods  in terms of PSNR and SSIM values. More valuable is that the TNSS-based QMC can effectively recover the color information and geometry of objects in color videos, thus presenting  visually comfortable reconstructions.

\section{Conclusion}\label{sec:con}
In this paper, a theoretical analysis is proposed for the working principle of NSS-based approaches.
 Based on this, a new NSS-based QMC method is presented and applied to solve the color image inpainting problem.    The NSS prior is applied to search similar patches,  gathered into a low-rank  quaternion matrix, of a color image, and a good reconstruction is  computed  by a new QMC algorithm.  A novel TNSS-based QMC method  is also developed to inpaint color videos from  incomplete and corrupted frontal tubes. Numerical experiments on (large-scale) color image and video inpainting indicate the superiority  of the newly proposed methods,  resulting in high PSNR and SSIM measures and particularly the best visual quality among competing methods.
If all entries are real numbers, then the TNSS-based QMC method reduces to the TNSS-based matrix completion method, which can be applied to solve the grey video inpainting problem.

In our numerical experiments, the proposed TNSS-based QMC has superiority on color video inpainting to the state-of-the-art methods.
This has been verified within the  `TUBE' and `NON-TUBE'  types of  incompleteness and corruption.
We find the `TUBE' type of incompleteness more challenging. Fortunately, TNSS-based QMC can solve this problem well.
 However,  there is still an open problem that how to inpaint a color video from a piece of incompleteness and corruption. This will be a topic worth studying in the future.
  New fast algorithms for the proposed methods also need to be further investigated.  We refer to  \cite{yuan2009,lazendic2021} as preliminary knowledge for future research.


%


\section*{Acknowledgment}
This work is  supported  in part by the National Natural Science Foundation of China under grants 12171210, 12090011,  11771188, 61876203, 12171072 and 12061052;  HKRGC Grant Numbers: GRF
12200317, 12300218, 12300519 and 17201020;
Natural Science Fund of Inner Mongolia Autonomous Region (No. 2020MS01002);
the Major Projects of Universities in Jiangsu Province (No. 21KJA110001);  and the Applied Basic 
Research Project of Sichuan Province (No. 2021YJ0107); the Priority Academic Program Development Project (PAPD);  and  the Top-notch Academic Programs Project (No. PPZY2015A013) of Jiangsu Higher Education Institutions.

We are grateful to the handing editor and the anonymous referees  for their useful suggestions and to Dr. Yongyong Chen and Dr. Jifei Miao for sharing  with us their MATLAB codes.

\bibliographystyle{siam}

\appendix
\section{\bf The supplementary material of Section \ref{ss:admm}}
In this section we present the detailed steps of solving the joint minimization problem (14) in the main body.

The $[\bf{L},\bf{Q}]$ subproblem of (14) is 
\begin{equation*} \label{e:LQsubproblem}
\begin{array}{rl}
\underset{{\bf L}, {\bf Q}}{\min}\!&\!
\left\Vert {\bf L} \right \Vert_{\ast}\!\! +\frac{\mu}{2} \left\Vert {\bf L-\bf P+\bf Y/\mu}\right\Vert _{F}^2+\frac{\mu}{2} \left\Vert {\bf S-\bf Q+\bf Z/\mu}\right\Vert _{F}^2\\
\ {\rm s.t.} &
\mathcal{P}_{\Omega}\left({\bf P}+{\bf Q}\right)={\bf X},
\end{array}
\end{equation*}
where  $\mu$ is the penalty parameter.
Since $\bf{L}$ and $\bf{Q}$ are decoupled, the $[\bf{L},\bf{Q}]$ subproblem can be solved separately. The $\bf{L}$  subproblem is
\begin{equation} \label{lsub}
\begin{array}{rl}
\underset{{\bf L}}{\min}&
\left\Vert {\bf L} \right \Vert_{\ast} +\frac{\mu}{2} \left\Vert {\bf L-\bf P+\bf Y/\mu}\right\Vert _{F}^2,
\end{array}
\end{equation}
which has the exact solution as
$
{\bf U}\Sigma_{\frac{1}{\mu}}{\bf V}^H,
$
where ${\bf  P-\bf Y/\mu} = {\bf U}  \Sigma {\bf V}^H$ and $ \Sigma_{\frac{1}{\mu}}=\diag(\{\max(\sigma_i-\frac{1}{\mu},0)\}) $.  The computational cost of updating $\bf{L}$ is $O(\min (n_1n_2^2,n_1^2 n_2))$.
The $\bf{Q}$ subproblem
\begin{equation} \label{qcmodel}
\underset{\mathcal{P}_{\Omega}\left({\bf P}+{\bf Q}\right)\!=\!{\bf X}}{\min}\;\;\;  \left\Vert {\bf S-\bf Q+\bf Z/\mu}\right\Vert _{F}^2
\end{equation}
can be exactly solved as
$$
{\bf Q}_{i,j} =
\left\{ {\begin{array}{*{20}{l}}
({\bf  X}-{\bf  P})_{i,j}, &\text{if}~ (i,j)\in \Omega,\\
({\bf S}+{\bf Z}/\mu)_{i,j},  & \text{if}~ (i,j)\in \bar{\Omega}.
\end{array}} \right.$$  
The computational cost of updating  $\bf{Q}$ is $O(n_1n_2)$.

The $[\bf{S},\bf{P}]$ subproblem of (14) is 
\begin{equation*} \label{e:SQsubproblem}
\begin{array}{rl}
\underset{{\bf S}, {\bf P}}{\min}&
 \lambda \left\Vert {\bf S}\right\Vert _{1}+\frac{\mu}{2} \left\Vert {\bf L-\bf P+\bf Y/\mu}\right\Vert _{F}^2+\frac{\mu}{2} \left\Vert {\bf S-\bf Q+\bf Z/\mu}\right\Vert _{F}^2\\
\ {\rm s.t.} &
\mathcal{P}_{\Omega}\left({\bf P}+{\bf Q}\right)={\bf X},
\end{array}
\end{equation*}
where  $\mu$ is the penalty parameter.
Since $\bf{S}$ and $\bf{P}$ are decoupled, the $[\bf{S},\bf{P}]$ subproblem can solved separately.
The $\bf{S}$  subproblem is
\begin{equation} \label{scmodel}
\begin{array}{rl}
\underset{{\bf S}}{\min}&
\lambda \left\Vert {\bf S}\right\Vert _{1}+\frac{\mu}{2} \left\Vert {\bf S-\bf Q+\bf Z/\mu}\right\Vert _{F}^2,
\end{array}
\end{equation}
which has the exact solution $
{\text{soft}}_{\frac{\lambda}{\mu}}\left( \bf Q-\bf Z/\mu\right)
$ via the component-wise soft thresholding
\begin{equation*}
\text{soft}_{\frac{\lambda}{\mu}}(\bf y) = \mathrm{sign}(\bf y) \max(|\bf y| - \frac{\lambda}{\mu},0).
\end{equation*}
 The computational cost of updating ${\bf S}$ is $O(n_1n_2)$.
The $\bf{P}$ subproblem
\begin{equation} \label{pqcmodel}
\underset{\mathcal{P}_{\Omega}\left({\bf P}+{\bf Q}\right)\!=\!{\bf X}}{\min}\;\left\Vert {\bf L-\bf P+\bf Y/\mu}\right\Vert _{F}^2
\end{equation}
can be exactly solved as
$$
{\bf P}_{i,j} =
\left\{ {\begin{array}{*{20}{l}}
{\bf F}_{i,j}/2\mu, &\text{if}~ (i,j)\in \Omega,\\
({\bf L}+{\bf Y}/\mu)_{i,j},  & \text{if}~ (i,j)\in \bar{\Omega},
\end{array}} \right.$$
where $\Fq=\mu {\bf L}+\mu {\bf X}- \mu {\bf S}+{\bf Y}- {\bf Z}$.
  The computational cost of updating $\bf{P}$ is $O(n_1n_2)$.

\section{The supplementary material of Example \ref{e:ex1}}
In this section,   we compare the efficiency of NSS-based QMC with various sizes of patch and searching window.
Without changing the setting  above experiments, two cases are tested:
\begin{itemize}
\item
{\bf Case 1:} The size of patch is set as $s_1=6$ and the size of searching window  is randomly chosen as about multiplications of $s_1$,  $s_2=12,18,24,30,54$.
\item
{\bf Case 2:} The size of patch is set as $s_1=16$ and the size of searching window  is randomly chosen as  $s_2=16,18,20,32,64$.
\end{itemize}
The reconstructed color images are listed in Fig.\ref{fig:flower2}, and the variation trend of PSNR and SSIM values are shown in Figs.\ref{fig:flower3_1}--\ref{fig:flower3_2}.
From these numerical results, we find that  the smaller the patch size, the better the reconstruction. As the searching window increases, the PSNR and SSIM values firstly go up and then go down. A proper size of searching window depends on the data.

\begin{figure}[!h]
\centering
\includegraphics[width=0.9\linewidth,height=0.8\linewidth]{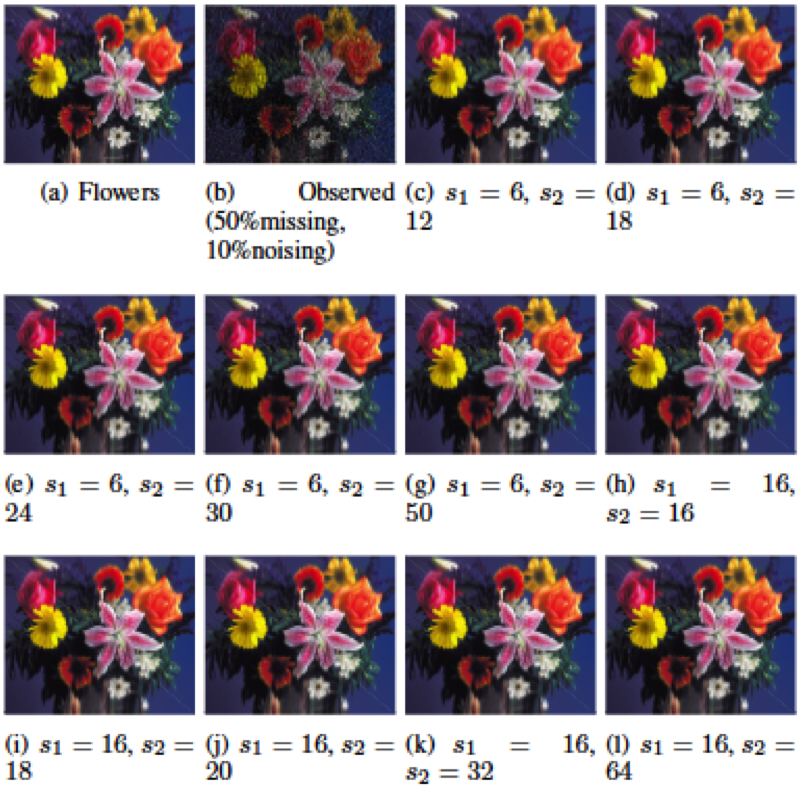}
\caption{Visual comparison of completion results by NSS-based QMC with different sizes of patch ($s_1$) and searching window ($s_2$).}
\label{fig:flower2}
\end{figure}

\begin{figure}[!h]
\centering
\includegraphics[width=0.9\linewidth,height=0.8\linewidth]{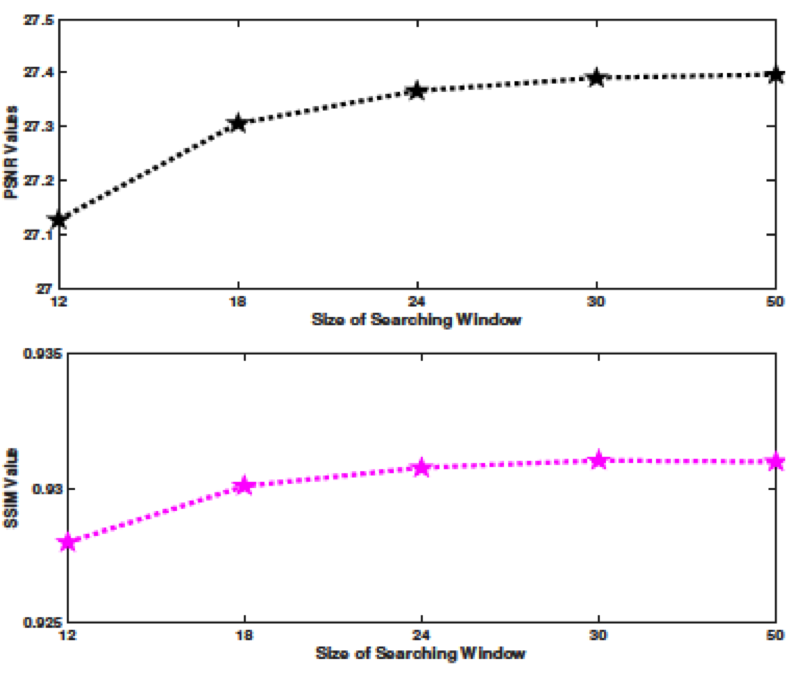}
\caption{PSNR and SSIM Values  of completion results by NSS-based QMC with  $s_1=6$ and $s_2=12,18,24,30,50$. }
\label{fig:flower3_1}
\end{figure}

\begin{figure}[!h]
\centering
\includegraphics[width=0.9\linewidth,height=0.8\linewidth]{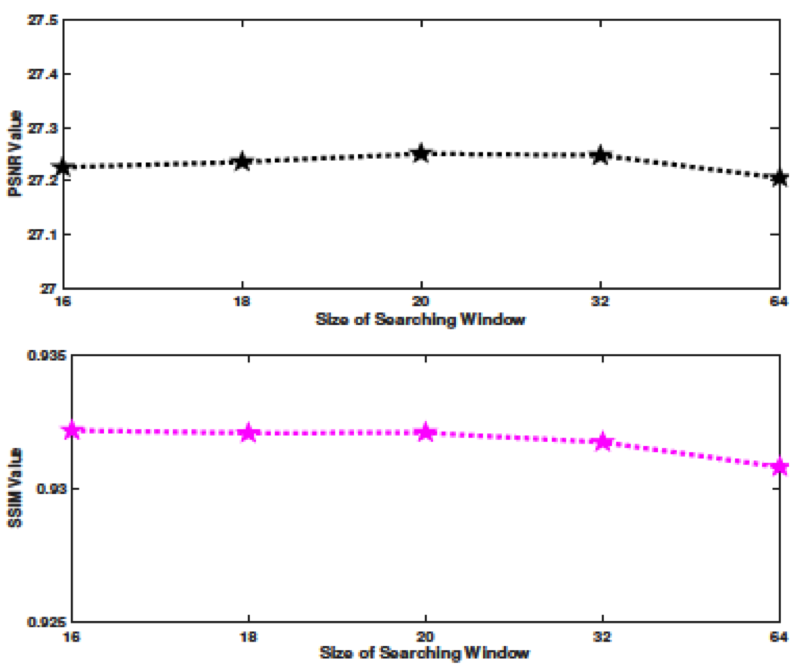}
\caption{PSNR and SSIM Values  of completion results by NSS-based QMC with  $s_1=16$ and $s_2=16,18,20,32,64$. }
\label{fig:flower3_2}
\end{figure}

\newpage

\section{\bf The supplementary material of  Example \ref{e:ex4}}
In this section, we show first 4 frames of original, observed and recovered video in Case 1 that  10$\%$ percent pixels are corrupted by  sparse noise   and in the `NON-TUBE' type of corruption and their corresponding enlarged parts in Fig.\ref{fig:videohorsenois01random}.
\begin{figure}
\centering
\includegraphics[width=0.9\linewidth,height=0.6\linewidth]{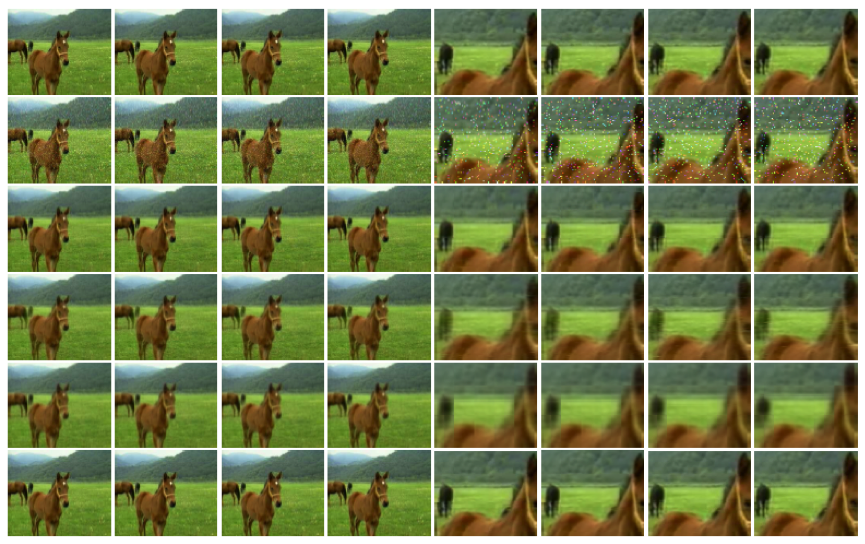}
\caption{First 4 frames and enlarged parts of the original color video (1st row), the observations (2nd row) and the reconstructions by QMC (3rd row),  TNN (4th row),  TTNN (5th row), and  TNSS-based QMC (last row).}
\label{fig:videohorsenois01random}
\end{figure}

Next, we show show first 4 frames of original, observed and recovered videos in Case 2 that 80$\%$ percent pixels are missing randomly and in the  `TUBE' type of incompleteness and their corresponding enlarged parts in Fig.\ref{fig:video2horse_TMac}.

\begin{figure}
\centering
\includegraphics[width=0.9\linewidth,height=0.6\linewidth]{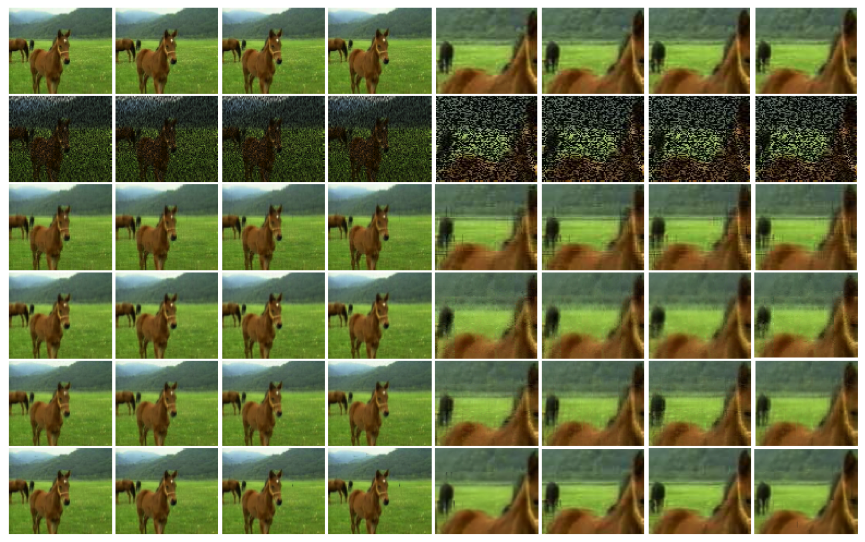}
\caption{First 4 frames  and enlarged parts of the original color video (1st row), the observe color video(2nd row) and the reconstructions by  QMC(3rd row),  TMac$\_{}$TT(4th row), and TNSS-based QMC(last row).}
\label{fig:video2horse_TMac}
\end{figure}

\section{More numerical and visual results}

 In this section, we present more numerical and visual results for Example \ref{e:ex1} in the main body.
  Several comparisons were implemented on the additional seven standard color images under the situation: $(1-\rho,\gamma)=(50\%,10\%)$. The stopping criteria are that the norm of the successive iterates is less than $1e$-$4$ and the maximum number of iteration is $500$.  The numerical results are shown in Table \ref{tab:7im}.   The best PSNR and SSIM values are  shown in bold,  and the second-best ones are underlined.
  The corresponding color images are shown in Fig.\ref{fig:qmc7im}.

From the numerical and visual comparisons, it is not difficult to find that TMac$\_{}$TT [20], LRQA-L [17] and LRQMC [18] fail to handle the robust color image inpainting problem ~(i.e., inpainting color image from incomplete and corrupted pixels), which is consistent with the models in the corresponding papers. What's more, there are three methods, QMC, TC and NSS-based QMC successfully recovering color images from incomplete and corrupted pixels. Especially, both from the PSNR, SSIM values and the recovered images in the situation $(1-\rho,\gamma)=(50\%,10\%)$, the performance of NSS-based QMC is the most excellent among them. Take `Sails' as an example, the PSNR and SSIM values of NSS-based QMC are (26.99, 0.8389), which are higher than those of QMC (23.97, 0.7070) and TC (24.12, 0.7072). For the reconstructed images, the edges recovered by NSS-based QMC are much cleaner than those recovered by two other methods.

\begin{figure}
\centering
\includegraphics[width=0.99\linewidth,height=0.95\linewidth]{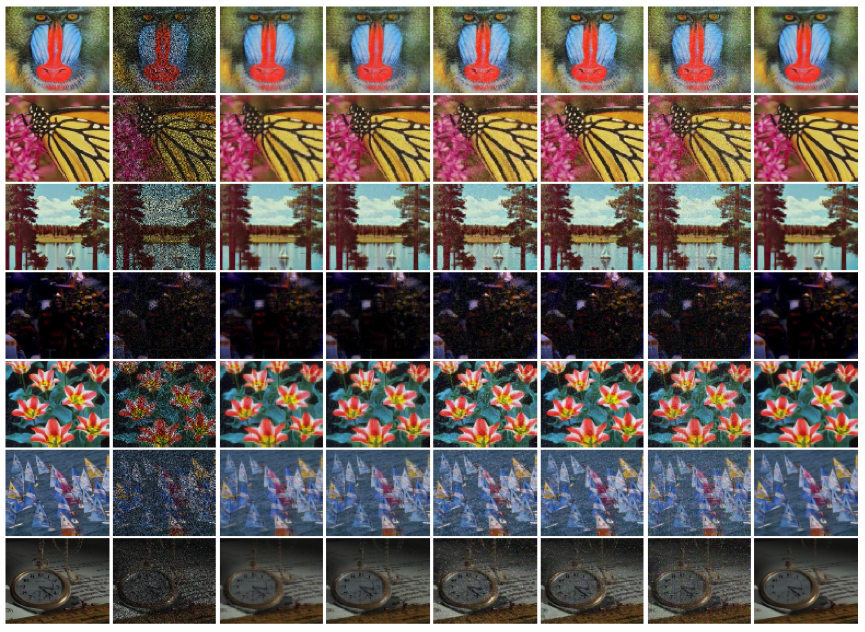}
\caption{The original color images (1st column), the observations (2nd column),  and the reconstructions by QMC (3rd column), TC(4th column),  TMac$\_{}$TT (5th column), LRQA-L(6th column), LRQMC(7th column),  and  TNSS-based QMC (last column). }
\label{fig:qmc7im}
\end{figure}

 \begin{table}
\caption{
PSNR and SSIM values of the reconstructed images by QMC, TC, TMac$\_{}$TT, LRQA-L, LRQMC, and NSS-based QMC.}
\label{tab:7im}
\begin{center}
\renewcommand{\arraystretch}{1.0} \vskip-3mm {\fontsize{8pt}{\baselineskip}%
\selectfont
\scalebox{0.850}{
\begin{tabular}{|c|cc|cc|cc|cc|cc|cc|}
\hline
Images        & \multicolumn{2}{c|}{ QMC } &\multicolumn{2}{c|}{ TC}& \multicolumn{2}{c|}{ TMac$\_{}$TT}& \multicolumn{2}{c|}{ LRQA-L } &\multicolumn{2}{c|}{LRQMC}&\multicolumn{2}{c|}{NSS-based QMC}  \\
       &  PSNR&SSIM &PSNR&SSIM&PSNR&SSIM&PSNR&SSIM      &  PSNR&SSIM &PSNR&SSIM \\ \hline\hline
Baboon
&  22.99 & 0.6523   &  22.88  & \underline{0.6533}&20.01& 0.5617 &  18.71 & 0.4822   & \underline{24.34}  & 0.4290 &  {\bf 24.90 }& {\bf 0.7644}  \\ \hline
Monarch
&  22.58 & \underline{0.7271}&\underline{23.59}  & 0.7084 &19.82&0.5575  & 19.30& 0.5395   & 23.52& 0.3949 &  {\bf 27.17}  &  {\bf 0.8716}  \\ \hline
Sailboat
&  23.39 & \underline{0.7322}   &   23.90  & 0.7163 &20.25  & 0.5487 &  18.90 & 0.4686   &\underline{24.34} & 0.3797 &  {\bf 25.72} &  {\bf 0.9283}  \\ \hline
Carnev
&24.12 & 0.7359  &  \underline{24.90}  & \underline{0.7621} & 18.47  & 0.2728   &  19.00 & 0.3526   &23.01 & 0.2266 &  {\bf 28.27}  &  {\bf 0.9179} \\ \hline
Tulips
&  \underline{23.46} & 0.7409   &   23.36 &  \underline{0.7451} & 19.52 & 0.5177 &  19.19 & 0.5368   &23.43 & 0.4114 &  {\bf 27.96}  &  {\bf 0.8637}  \\ \hline
Sails
&  23.97 & 0.7070   &   \underline{24.12} &  \underline{0.7072} & 20.05 &0.3966  &19.08 & 0.4930   &23.72 & 0.3968 &  {\bf 26.99}  &  {\bf 0.8389} \\ \hline
Watch
&  24.71 &\underline{0.7247}   &   \underline{25.77}  &  0.7168 & 19.29 &0.4315  &  19.29 & 0.4394   &23.99 & 0.3034 &  {\bf 27.78} &  {\bf 0.9057} \\ \hline
\end{tabular}}
} \vskip6mm
\end{center}
\end{table}

\end{document}